\definecolor{LightCyan}{rgb}{0.8, 0.9, 1}
\def \algname {\text{Self-Play Preference Optimization}}
\definecolor{Gray}{gray}{0.5}
\newcolumntype{a}{>{\columncolor{Gray}}c}
\title{\huge Self-Play Preference Optimization for Language Model Alignment}
\author{Yue Wu\thanks{Equal contribution} \thanks{
Department of Computer Science, 
University of California, Los Angeles, 
Los Angeles, 
CA 90095; 
e-mail: {\tt ywu@cs.ucla.edu}
} 
	~~
Zhiqing Sun\footnotemark[1] \thanks{
Language Technologies Institute,
Carnegie Mellon University,
Pittsburgh, 
PA 15213;
e-mail: {\tt zhiqings@cs.cmu.edu}} 
	~~
Huizhuo Yuan\footnotemark[1] \thanks{
Department of Computer Science, 
University of California, Los Angeles, 
Los Angeles, 
CA 90095; 
e-mail: {\tt hzyuan@cs.ucla.edu} 
} 
~~
Kaixuan Ji\thanks{
Department of Computer Science, University of California, Los Angeles, Los Angeles, CA 90095; e-mail: {\tt kauxuanji@cs.ucla.edu}} 
	~~
Yiming Yang\thanks{
Language Technologies Institute \& Machine Learning Department,
Carnegie Mellon University,
Pittsburgh, 
PA 15213;
e-mail: {\tt yiming@cs.cmu.edu}} 
	~~
Quanquan Gu\thanks{
Department of Computer Science, University of California, Los Angeles, Los Angeles, CA 90095; e-mail: {\tt qgu@cs.ucla.edu}}
}
\date{}
\begin{document}
\maketitle

\begin{abstract}
Standard reinforcement learning from human feedback (RLHF) approaches relying on parametric models like the Bradley-Terry model fall short in capturing the intransitivity and irrationality in human preferences. Recent advancements suggest that directly working with preference probabilities can yield a more accurate reflection of human preferences, enabling more flexible and accurate language model alignment. In this paper, we propose a self-play-based method for language model alignment, which treats the problem as a constant-sum two-player game aimed at identifying the Nash equilibrium policy. Our approach, dubbed \textit{\algname} (SPPO), utilizes iterative policy updates to provably approximate the Nash equilibrium. 
Additionally, we propose a new SPPO objective which is both strongly motivated by theory and is simple and effective in practice.
In our experiments, using only 60k prompts (without responses) from the UltraFeedback dataset and without any prompt augmentation, by leveraging a pre-trained preference model PairRM with only 0.4B parameters, SPPO can obtain a model from fine-tuning Mistral-7B-Instruct-v0.2 that achieves the state-of-the-art length-controlled win-rate of 28.53\% against GPT-4-Turbo on AlpacaEval 2.0. It also outperforms the (iterative) DPO and IPO on MT-Bench, Arena-Hard, and the Open LLM Leaderboard.
Starting from a stronger base model Llama-3-8B-Instruct, we are able to achieve a length-controlled win rate of 38.77\%.
Notably, the strong performance of SPPO is achieved without additional external supervision (e.g., responses, preferences, etc.) from GPT-4 or other stronger language models.  Codes
are available at \url{https://github.com/uclaml/SPPO}.
\end{abstract}

\section{Introduction}
Large Language Models (LLMs) \citep[e.g.,][]{ouyang2022training, achiam2023gpt}, have shown remarkable capabilities in producing human-like text, fielding questions, and coding. Despite their advancements, these models encounter challenges in tasks requiring high levels of reliability, safety, and ethical alignment. To address these challenges, Reinforcement Learning from Human Feedback (RLHF), also known as Preference-based Reinforcement Learning (PbRL), presents a promising solution. This framework for policy optimization, highlighted in works by \citet{christiano2017deep} and recently in \citet{ouyang2022training}, has led to significant empirical success in fine-tuning instruction-following LLMs, making them more aligned with human preferences and thus more helpful.

Most existing approaches to RLHF rely on either explicit or implicit reward models. Taking InstructGPT \citep{ouyang2022training} as an example, a reference policy $\pi_{\text{ref}}$ is first established, typically from supervised pre-training or instruction-based (supervised) fine-tuning. An explicit reward function is obtained by training a reward model based on human preference feedback data, employing the Bradley-Terry (BT) model \citep{bradley1952}. Subsequently, reinforcement learning algorithms such as Proximal Policy Optimization \citep[PPO]{schulman2017proximal} are used to fine-tune the reference LLM $\pi_{\text{ref}}$ by maximizing the expected reward function. The reward model provides a ``reward score'' $r(\yb; \xb)$  for the given response $\yb$ and prompt $\xb$, approximately reflecting how humans value these responses. 
More recently, methods like Direct Preference Optimization \citep[DPO]{rafailov2024direct} have been introduced. These methods forgo the training of a separate reward model
but still fundamentally adhere to the reward maximization objective and are determined by parametric models such as the BT model.

These models presuppose a monotonous and transitive relationship among preferences for different choices. However, empirical evidence suggests otherwise. For instance, \citet{tversky1969intransitivity} observed human decisions can be influenced by different factors and exhibit inconsistency. Such observations indicate that human preferences do not always adhere to a single, value-based hierarchy and can even appear irrational, such as exhibiting loops in preference relations. For LLMs, another motivating evidence is that \citet{munos2023nash} has empirically shown that directly predicting the pairwise preference can achieve higher accuracy than predicting the preference via a BT-based reward model. 

To address the inconsistency in human preference, researchers have proposed to work directly with the preference probability and design algorithms that can more flexibly represent human preferences \citep{lou2022active, wu2023borda} in the ranking or bandit setting. 
Recently, an emerging line of work~\citep{wang2024rlhf,munos2023nash,swamy2024minimaximalist} also proposed to study RLHF for LLMs under such general preference $\PP(\yb \succ \yb' | \xb)$, where $\yb$ and $\yb'$ are two different responses and $\xb$ is prompt. The goal is to identify the Nash equilibrium or von Neumann winner of the two-player constant-sum game
\begin{align*}
    (\pi^*, \pi^*)
    =
    \arg \max_{\pi}\min_{\pi'}
    \EE_{\xb \sim \cX}\Big[ 
    \EE_{\yb \sim \pi(\cdot|\xb), \yb' \sim \pi'(\cdot|\xb)}
    \big[
    \PP(\yb \succ \yb' | \xb)
    \big]
    \Big],
\end{align*}
where each player is an LLM that outputs responses and aims to maximize its probability of being preferred over its opponent. 

Independent from our work, \citet{swamy2024minimaximalist} proposed Self-play Preference Optimization (SPO)\footnote{The SPO framework does not pertain to the efficient fine-tuning of LLMs. Our Self-Play Preference Optimization (SPPO) focuses on LLM alignment and was developed independently. To distinguish it from the SPO framework, we use the abbreviation SPPO.} for the same (unregularized) two-player constant-sum game. They provide a general reduction of preference optimization to no-regret online learning for the multi-step Markov Decision Process. When constrained to the bandit setting for LLMs, their proposed algorithmic framework reduces to the famous Hedge algorithm \citep{freund1997decision}, which admits the exponential update rule as described in \eqref{eqn:sppo-ideal}. To approximately solve the exponential update, \citet{swamy2024minimaximalist} then proposed to employ typical policy optimization algorithms such as Proximal Policy Optimization (PPO) \citep{schulman2017proximal} or Soft Actor-Critic (SAC) \citep{haarnoja2018soft} to maximize the win rate against the reference policy and evaluated the performance of their self-play algorithms in robotic and game tasks. However, it typically requires more effort to apply PPO or SAC to large-scale fine-tuning of LLM and make them work stably. Therefore, it remains unclear how their self-play framework can be applied to large-scale language model alignment efficiently.

In this paper, motivated by these developments mentioned above, we propose a new self-play algorithm that (1) enjoys provable guarantees to solve the two-player constant-sum game; and (2) can scale up to large-scale efficient fine-tuning of large language models.  
In detail, we formulate the RLHF problem as a constant-sum two-player game. Our objective is to identify the Nash equilibrium policy, which consistently provides preferred responses over any other policy on average. To identify the Nash equilibrium policy approximately, we adopt the classic online adaptive algorithm with multiplicative weights \citep{freund1999adaptive} as a high-level framework that solves the two-player game. Further, each step of the high-level framework can be approximated by a \textit{self-play} mechanism, where in each round the policy is playing against itself in the previous round by fine-tuning it on synthetic data that are generated by the policy and annotated by the preference model. 

Our contributions are highlighted as follows:
\begin{itemize}[leftmargin=*]
\item Starting from the exponential weight update algorithm which provably converges to the Nash equilibrium of the two-player constant-sum game, we propose the \textit{\algname} (SPPO) algorithm for large language model alignment. The algorithm converges to an approximate
Nash equilibrium provably and admits a simple form of loss function for easy optimization.

\item Unlike the symmetric pairwise loss such as DPO and Identity Preference Optimization (IPO)~\citep{azar2023general}, we propose a new optimization objective that does not rely on pairwise comparisons. The new loss objective~\eqref{eqn:sppo-win-rate}, initially driven by game-theoretical concepts, turns out strongly motivated by the policy gradient theory and implicitly encourages the LLM to learn a token-level optimal value function.

\item Empirically, SPPO significantly enhances the well-aligned Mistral-7B-Instruct-v0.2 and Llama-3-8B-Instruct model, achieving an increase of over 11\% on the length-controlled win rate against GPT-4-Turbo on the AlpacaEval 2.0 \citep{dubois2024length} test set. Additionally, SPPO exhibits strong generalist abilities across different tasks, including MT-Bench, the Open LLM Leaderboard, and the more recent, more challenging benchmark, Arena-Hard. 
Unlike iterative DPO/IPO, which tends to show performance decay on other benchmarks when optimized towards the PairRM score, SPPO's performance gain is consistent. 
Notably, all the strong performances are achieved without external supervision (e.g., responses, preferences, etc.) from GPT-4 or other stronger language models. 

\end{itemize}

Concurrent to our work, several studies, including Direct Nash Optimization \citep{rosset2024direct} and \textsc{REBEL} \citep{gao2024rebel} have also explored using either cross-entropy loss or square loss minimization to approximate the exponential update. Specifically, they used the same trick proposed in DPO \citep{rafailov2024direct} to cancel out the log-partition factor and directly regress on the win-rate difference. 
However, it is shown theoretically and empirically by \citet{pal2024smaug} that the pairwise loss may only drive the {\it relative} likelihood gap to be large, but may not necessarily drive up the likelihood of the preferred responses. Our method instead has a deeper connection to the policy gradient theory and can effectively match the likelihood of the response to its win rate.

\section{Related Work} \label{sec:related}
\paragraph{RLHF with Explicit/Implicit Reward Model}
Originally, reinforcement learning from human feedback (RLHF) was proposed by \citet{christiano2017deep} as a methodology that first learns a reward model reflecting human preferences and then uses reinforcement learning algorithms to maximize the reward. This methodology is applied by \citet{ouyang2022training} to fine-tune instruction-following large language models and leads to the popular ChatGPT.

The reward model in the works mentioned above assumes a parametric model such as the Bradley-Terry model \citep{bradley1952}, which assigns a ``score'' representing how preferred a given response is. 
More recently, \citet{rafailov2024direct} proposed to instead directly solve the closed-form solution of such a score implied by the Bradley-Terry model. The Direct Policy Optimization (DPO) method is claimed to be more efficient and stable, yet, still implicitly assumes such a reward model that specifies the ``score''. In a similar spirit, \citet{zhao2023slic} proposed to calibrate the score so that the score of the winner in comparison has a margin over the score of the loser, and induces a different SLic loss. Similarly, \citet{ethayarajh2024kto} derived a different loss function (called  KTO) from the Kahneman-Tversky human utility function, which implicitly denotes a score of the given response.  \citet{liu2023statistical} proposed Rejection Sampling Optimization (RSO) which utilizes a preference model to generate preference pairs with candidates sampled from the optimal policy; then preference optimization is applied on the sampled preference pairs. 
\citet{hong2024reference} proposed Odds Ratio Preference Optimization (ORPO) algorithm that can perform supervised fine-tuning and preference alignment in one training session without maintaining an intermediate reference policy.

\paragraph{RLHF with General Preference Model}
Often, the human preference is not strictly transitive, and cannot be sufficiently represented by a single numerical score. 
\citet{azar2023general} proposed a general preference optimization objective based on the preference probability between a pair of responses instead of a score of a single response. They further propose a learning objective based on identity mapping of the preference probability called IPO (Preference Optimization with Identity mapping), which aims to maximize the current policy's expected winning probability over a given reference policy. \citet{munos2023nash} formulated the RLHF problem with general preference as a two-player, constant-sum game, where each player is one policy that aims to maximize the probability of its response being preferred against its opponent.  They aim to identify the Nash equilibrium policy of this game and propose a mirror-descent algorithm that guarantees the last-iterate convergence of a policy with tabular representations\footnote{Due to the tabular representation, computing the normalizing factor is prohibitive and the algorithm is approximately executed by sampling one token instead of a full response.}.  \citet{wang2024rlhf} proposed to identify the Nash equilibrium policy for 
multi-step MDPs when a general preference model is present and shows that the problem can be reduced to a two-player zero-sum Markov game.


\paragraph{Theory of RLHF}
There is also a line of research to analyze RLHF and provide its theoretical guarantees. \citet{zhu2023principled} studied the standard RLHF with separate reward-learning and model-tuning and proposed a pessimistic reward-learning process that provably learns a linear reward model. \citet{wang2024rlhf} proposed a framework to reduce any RLHF problem with a reward model to a reward-based standard RL problem. Additionally, they proposed to identify the Nash equilibrium policy when a general preference model is present and show that the problem can be reduced to a two-player zero-sum Markov game. 
\citet{xiong2023gibbs} studied the reverse-KL regularized contextual bandit for RLHF in different settings and proposed efficient algorithms with finite-sample theoretical guarantees.
\citet{ye2024theoretical} studied the theoretical learnability of the KL-regularized Nash-Learning from Human Feedback (NLHF) by considering both offline and online settings and proposed provably efficient algorithms. \citet{ji2024reinforcement} proposed an active-query-based proximal policy optimization algorithm with regret bounds and query complexity based on the problem dimension and the sub-optimality gap.

\paragraph{Self-Play Fine-Tuning}
Most works mentioned above \citep{rafailov2024direct, zhao2023slic, azar2023general, ethayarajh2024kto} consider one single optimization procedure starting from some reference policy. The same procedure may be applied repeatedly for multiple rounds in a self-play manner. In each round, new data are generated by the policy obtained in the last round; these new data are then used for training a new policy that can outperform the old policy.

The self-play fine-tuning can be applied to both scenarios with or without human preference data.
For example, \citet{singh2023beyond} proposed an Expectation-Maximization (EM) framework where in each round, new data are generated and annotated with a reward score; the new policy is obtained by fine-tuning the policy on the data with a high reward.
\citet{chen2024self} proposed a self-play framework to fine-tune the model in a supervised way. In each round, new preference pairs are synthesized by labeling the policy-generated responses as losers and the human-generated responses as winners. Then DPO is applied in each round to fine-tune another policy based on these synthesized preference data.
\citet{yuan2024self} proposed Self-Rewarding Language Models, where the
language model itself is used to annotate preference on its own responses. Iterative DPO is applied to fine-tune language models on these annotated data. 
These works show iterative fine-tuning can significantly improve the performance.

\citet{swamy2024minimaximalist}   
considered a more general multi-step Markov Decision Process (MDP) setting and proposed Self-play Preference Optimization (SPO), an RLHF framework that can utilize any no-regret online learning algorithm for preference-based policy optimization. They then instantiated their framework with Soft Policy Iteration as an idealized variant of their algorithm, which reduces to the exponential weight update rule \eqref{eqn:sppo-ideal} when constrained to the bandit setting.
The main difference is that they focus on the multi-round Markov decision process (MDP) in robotic and game tasks rather than on fine-tuning large language models and approximating the update using policy optimization methods such as PPO. 

Concurrent to our work, \citet{rosset2024direct} proposed the Direct Nash Optimization (DNO) algorithm based on the cross-entropy between the true and predicted win rate gaps, and provided theoretical guarantees on the error of finite-sample approximation. However, their practical version still utilizes the iterative-DPO framework as in \citet{xu2023some} with the DPO loss instead of their derived DNO loss. Notably, in their experiments, they added the GPT-4 generated responses as their ``gold sample'' into their fine-tuning data, and used GPT-4 as a judge to assign a numerical score to each response for preference pair construction. In sharp contrast, our work does not require the use of any strong external supervision besides a small-sized reward model. Another concurrent work \citep{gao2024rebel} proposed \textsc{REBEL}, an iterative fine-tuning framework via regressing the relative reward. When applied to the preference setting, it results in a similar algorithm to our algorithm SPPO, except that SPPO approximates the log-partition factor $\log Z_{\pi_t}(\xb)$ with a constant $\eta/2$ while \textsc{REBEL} regresses on the win rate difference (so that $\log Z_{\pi_t}(\xb)$ is canceled).  Additionally, \citet{calandriello2024human} pointed out that optimizing the IPO loss \citep{azar2023general} iteratively with self-play generated data is equivalent to finding the Nash equilibrium of the two-player game, and they proposed the IPO-MD algorithm based on this observation, which generates data with a mixture policy similar to the Nash-MD algorithm.

\section{Preliminaries}
We consider the preference learning scenario as follows. Given a text sequence (commonly referred to as prompt) $\xb = [x_1, x_2, \dots]$, two text sequences $\yb = [y_1, y_2, \dots]$ and $\yb'$ are generated as responses to the prompt $\xb$. An autoregressive language model $\pi$ given the prompt $\xb$ can generate responses $\yb$ following the probability decomposition
\begin{align*}
    \pi(\yb|\xb)
    =
    \prod_{i=1}^{N}\pi(y_i|\xb, \yb_{<i}).
\end{align*}
Given the prompt $\xb$ and two responses $\yb$ and $\yb'$, a preference oracle (either a human annotator or a language model) will provide preference feedback $o(\yb \succ \yb'|\xb) \in \{0,1\}$ indicating whether $\yb$ is preferred over $\yb'$. We denote $\PP(\yb \succ \yb' | \xb) = \EE[o(\yb \succ \yb'|\xb)]$ as the probability of $\yb$ ``winning the duel'' over $\yb'$. The KL divergence of two probability distributions of density $p$ and $q$ is defined as $\mathrm{KL}(p \| q) = \EE_{\yb \sim p(\yb)} \Big[\log \frac{p(\yb)}{q(\yb)} \Big]$.

\subsection{RLHF with Reward Models}

\citet{christiano2017deep} first learn a reward function $r(\yb ;\xb)$ following the Bradley-Terry model~\citep{bradley1952}. For a prompt-response-response triplet $(\xb, \yb, \yb')$, the Bradley-Terry model specifies the probability of $\yb$ being chosen over $\yb$ as
\begin{align} \label{eqn:BT}
    \PP(\yb \succ \yb' | \xb)
    & = 
    \frac{\exp(r(\yb; \xb))}{\exp(r(\yb; \xb)) + \exp(r(\yb'; \xb))}
    =
    \sigma \big(r(\yb; \xb)-r(\yb'; \xb)
    \big),
\end{align}
where $\sigma(x) = e^x / (e^x + 1)$ is the logistic function. The reward function associated with the Bradley-Terry model can be estimated by maximizing the log-likelihood $\log \PP(\yb \succ \yb' | \xb)$. Suppose the true reward function $r(\yb; \xb))$ is available, \citet{christiano2017deep} proposed to solve the following optimization problem with policy optimization algorithms in RL such as PPO \citep{schulman2017proximal}: 

\begin{align} \label{eqn:RLHF}
    \max_{\btheta} 
    \EE_{\xb \sim \cX, \yb \sim \pi_{\btheta}(\cdot|\xb)}
    [
    r(\yb; \xb)
    ]
    -
    \eta^{-1}
    \EE_{\xb \sim \cX}
    [\mathrm{KL}(\pi_{\btheta}(\cdot|\xb) \| \pi_{\text{ref}}(\cdot|\xb))],
\end{align}
where $\cX$ is the prompt distribution. 

\citet{rafailov2024direct} identified that the optimization problem above has a closed-form solution such that for any $\yb$,
\begin{align*}
    \pi^*(\yb|\xb)
    \propto 
    \pi_{\text{ref}}(\yb|\xb)
    \exp(\eta r(\yb; \xb)),
\end{align*}
which can be further converted to the DPO loss for any triplet $(\xb, \yb_{w}, \yb_{l})$ where the winner $\yb_{w}$ is chosen over the loser $\yb_{l}$:
\begin{align*}
    \ell_{\text{DPO}}(\xb, \yb_{w}, \yb_{l}; \btheta; \pi_{\text{ref}})
    & := 
    -\log \sigma \Bigg(
    \eta^{-1} \bigg[
    \log \bigg(\frac{\pi_{\btheta}(\yb_{w}|\xb)}{\pi_{\text{ref}}(\yb_{w}|\xb)}\bigg)
    -
    \log \bigg(\frac{\pi_{\btheta}(\yb_{l}|\xb)}{\pi_{\text{ref}}(\yb_{l}|\xb)}\bigg)
    \bigg]
    \Bigg).
\end{align*}

\subsection{RLHF with General Preference}

Following \citet{wang2024rlhf,munos2023nash}, we aim to establish RLHF methods without a reward model, as the human preference can be non-transitive \citep{tversky1969intransitivity}. Under a general preference oracle $\PP(\yb \succ \yb' | \xb)$, we follow \citet{dudik2015contextual} and aim to identify the \textit{von Neumann winner}. More specifically, the von Neumann winner $\pi^*$ is the (symmetric) Nash equilibrium of the following two-player constant-sum game:
\begin{align}
    (\pi^*, \pi^*)
    =
    \arg \max_{\pi}\min_{\pi'}
    \EE_{\xb \sim \cX}\Big[ 
    \EE_{\yb \sim \pi(\cdot|\xb), \yb' \sim \pi'(\cdot|\xb)}
    \big[
    \PP(\yb \succ \yb' | \xb)
    \big]
    \Big]. \label{eqn:game}
\end{align}

In addition, we define the winning probability of one response $\yb$ against a distribution of responses $\pi$ as 
\begin{align*}
    \PP(\yb \succ \pi | \xb) = \EE_{\yb' \sim \pi(\cdot|\xb)}[\PP(\yb \succ \yb' | \xb)],
\end{align*}
and the winning probability of one policy $\pi$ against another policy $\pi'$ as 
\begin{align*}
    \PP(\pi \succ \pi' | \xb) = \EE_{\yb \sim \pi(\cdot|\xb)} \EE_{\yb' \sim \pi'(\cdot|\xb)}[\PP(\yb \succ \yb' | \xb)].
\end{align*}
Furthermore, we define $\PP(\pi \succ \pi' ) = \EE_{\xb \sim \cX} [\PP(\pi \succ \pi' | \xb)]$, where $\xb$ is a prompt drawn from the prompt distribution $\cX$.
The two-player constant-sum game \eqref{eqn:game} can be simplified as
\begin{align*}
    (\pi^*, \pi^*)
    =
    \arg \max_{\pi}\min_{\pi'}
    \PP(\pi \succ \pi' ).
\end{align*}





\section{\algname (SPPO)} \label{sec:alg}
In this section, we introduce the \algname (SPPO) algorithm, derived from the following theoretical framework.
\subsection{Theoretical Framework}
There are well-known algorithms to approximately solve the Nash equilibrium in a constant-sum two-player game. In this work, we follow \citet{freund1999adaptive} to establish an iterative framework that can asymptotically converge to the optimal policy on average. We start with a theoretical framework that conceptually solves the two-player game as follows: 
    \begin{align}
    \pi_{t+1}(\yb|\xb)
    \propto 
    \pi_{t}(\yb|\xb)
    \exp(\eta \PP(\yb \succ \pi_t|\xb)), \,\,\text{for $t=1,2,\dots$}. \label{eqn:sppo-ideal}
\end{align}
 \eqref{eqn:sppo-ideal} is an iterative framework that relies on the multiplicative weight update in each round $t$ and enjoys a clear structure. Initially, we have a base policy $\pi_1$ usually from some supervised fine-tuned model. In each round, the updated policy $\pi_{t+1}$ is obtained from the reference policy $\pi_t$ following the multiplicative weight update. More specifically, a response $\yb$ should have a higher probability weight if it has a higher average advantage over the current policy $\pi_t$.

Equivalently, \eqref{eqn:sppo-ideal} can be written as 
\begin{align}
\pi_{t+1}(\yb|\xb)
=
\frac{\pi_{t}(\yb|\xb)
\exp\big(\eta \PP(\yb \succ \pi_t | \xb)
\big)}{Z_{\pi_t}(\xb)}, \label{eqn:pit+1}
\end{align}
where $Z_{\pi_t}(\xb) = \sum_{\yb}\pi_{t}(\yb|\xb)
\exp\big(\eta \PP(\yb \succ \pi_t | \xb)
\big)$ is the normalizing factor (a.k.a., the partition function).
For any fixed $\xb$ and $\yb$, the ideal update policy $\pi_{t+1}$ should satisfy the following equation:
\begin{align}
\log \bigg(\frac{\pi_{t+1}(\yb|\xb)}{\pi_{t}(\yb|\xb)}\bigg)
& =
\eta \cdot
\PP(\yb \succ \pi_t | \xb) - \log Z_{\pi_t}(\xb).
\label{eqn:log-partition}
\end{align}
Unlike the pair-wise design in DPO or IPO that cancels the log normalizing factor $\log Z_{\pi_t}(\xb)$ by differentiating \eqref{eqn:log-partition} between $\yb$ and $\yb'$, we choose to approximate \eqref{eqn:log-partition} directly in terms of $L_2$ distance: 
\begin{align}
    \pi_{t+1}
    & = 
    \argmin_{\pi}
    \EE_{\xb \sim \cX, \yb \sim \pi_t(\cdot|\xb)}
    \bigg(
    \log \bigg(\frac{\pi(\yb|\xb)}{\pi_{t}(\yb|\xb)}\bigg)
    -
    \Big(\eta \PP(\yb \succ \pi_t|\xb)- \log Z_{\pi_t}(\xb)
    \Big)
    \bigg)^2.
    \label{eqn:sppo-win-rate}
\end{align}

%

\paragraph{Estimation of the Probability} 
The optimization objective \eqref{eqn:sppo-win-rate} can be approximated with finite samples. We choose to sample $K$ responses $\yb_1, \yb_2, \dots, \yb_K \sim \pi_t(\cdot|\xb)$ for each prompt $\xb$, and denote the empirical distribution by $\hat{\pi}_t^K$. The finite-sample optimization problem can be approximated as
\begin{align}
    \pi_{t+1}
    & = 
    \argmin_{\pi}
    \EE_{\xb \sim \cX, \yb \sim \pi_t(\cdot|\xb)}
    \bigg(
    \log \bigg(\frac{\pi(\yb|\xb)}{\pi_{t}(\yb|\xb)}\bigg)
    -
    \Big(\eta \PP(\yb \succ \hat{\pi}^K_t|\xb) - \log Z_{\hat{\pi}^K_t}(\xb)
    \Big)
    \bigg)^2. \label{eqn:sppo-win-rate-finite}
\end{align}
Specifically, $\PP(\yb \succ \hat{\pi}^K_t|\xb) = \sum_{k=1}^{K} \PP(\yb \succ \yb_k|\xb) / K$ and $Z_{\hat{\pi}^K_t}(\xb) = \EE_{\yb \sim \pi_t(\cdot|\xb)}[\exp(\eta \PP(\yb \succ \hat{\pi}^K_t|\xb))]$. $Z_{\hat{\pi}^K_t}(\xb)$, treated as an expectation, can be further estimated by $B$ new samples with in total $O(KB)$ queries of the preference oracle $\PP$. 
\eqref{eqn:sppo-win-rate-finite} is an efficiently tractable optimization problem. Informally speaking, when $K \rightarrow \infty$, \eqref{eqn:sppo-win-rate-finite} will recover \eqref{eqn:sppo-win-rate}. We have the following guarantee on the convergence of \eqref{eqn:sppo-win-rate}:
\begin{theorem} \label{thm:nash}
Assume the optimization problem \eqref{eqn:sppo-win-rate} is realizable. Denote $\pi_t$ as the policy obtained via \eqref{eqn:sppo-win-rate} and the mixture policy $\bar{\pi}_T = \frac{1}{T} \sum_{t=1}^{T} \pi_t$. By setting $\eta = \Theta(1/\sqrt{T})$, we have that
\begin{align*}
    \max_{\pi}
    \big[
    \PP(\pi \succ \bar{\pi}_T)
    \big]
    -
    \min_{\pi}
    \big[
    \PP(\pi \prec \bar{\pi}_T)
    \big]
    =
    O(1 / \sqrt{T}).
\end{align*}
\end{theorem}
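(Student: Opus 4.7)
The plan is to reduce the statement to the classical regret analysis of the Hedge/multiplicative-weights algorithm in the self-play setting, plus a symmetry argument specific to the constant-sum preference game.

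First I would observe that, under the realizability assumption, the minimizer of the quadratic objective in \eqref{eqn:sppo-win-rate} achieves zero loss and therefore satisfies \eqref{eqn:log-partition} pointwise, which is equivalent to the exponential-weight update $\pi_{t+1}(\yb|\xb) \propto \pi_t(\yb|\xb)\exp(\eta\,\PP(\yb \succ \pi_t|\xb))$. Having identified the iterates with Hedge on the per-prompt action space, I would invoke the textbook potential-function / telescoping argument on $\Phi_t(\xb) := \log Z_{\pi_t}(\xb)$, using boundedness of the win-rate in $[0,1]$ and the elementary inequality $e^x \le 1 + x + x^2/2$ valid for $x \le 1$. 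For any comparator policy $\pi^\dagger$ and any fixed prompt $\xb$, this yields the standard bound
\begin{align*}
\sum_{t=1}^T \PP(\pi^\dagger \succ \pi_t \mid \xb) - \sum_{t=1}^T \PP(\pi_t \succ \pi_t \mid \xb)
\;\le\; \frac{\mathrm{KL}(\pi^\dagger(\cdot|\xb)\,\|\,\pi_1(\cdot|\xb))}{\eta} + \frac{\eta T}{2}.
\end{align*}
Taking expectation over $\xb \sim \cX$ and dividing by $T$, and using that the maximum over policies commutes with the expectation because policies decompose pointwise across prompts, I obtain $\sup_\pi \PP(\pi \succ \bar{\pi}_T) - \tfrac{1}{T}\sum_t \PP(\pi_t \succ \pi_t) \le C/(\eta T) + \eta/2$, with $C = \sup_\pi \EE_\xb[\mathrm{KL}(\pi(\cdot|\xb)\,\|\,\pi_1(\cdot|\xb))]$.

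Next I would exploit the constant-sum structure. Since the preference oracle satisfies $\PP(\yb\succ\yb'|\xb)+\PP(\yb'\succ\yb|\xb)=1$, we have $\PP(\pi\succ\pi)=1/2$ for every policy $\pi$, and more generally $\PP(\pi\prec\pi')=1-\PP(\pi\succ\pi')$. The first identity replaces the ``algorithm'' sum above by $T/2$, so $\sup_\pi \PP(\pi\succ\bar{\pi}_T) \le 1/2 + C/(\eta T) + \eta/2$. The second identity gives $\inf_\pi \PP(\pi\prec\bar{\pi}_T) = 1 - \sup_\pi \PP(\pi\succ\bar{\pi}_T) \ge 1/2 - C/(\eta T) - \eta/2$. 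Subtracting and choosing $\eta = \Theta(1/\sqrt T)$ to balance the two error terms delivers the claimed $O(1/\sqrt{T})$ duality gap.

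The main obstacle is controlling the constant $C$: for an enormous response space such as natural-language completions, the KL from a peaked best-response $\pi^\dagger$ to the initial $\pi_1$ can be as large as $\log|\cY|$, so the bound is only nontrivial once $T \gg \log|\cY|$ and the step size is tuned accordingly. I would address this by noting that $\cY$ is finite (finite vocabulary, finite maximum length) and that $\pi_1$ assigns strictly positive mass everywhere, so $C$ is finite though possibly large; the stated asymptotic rate then holds for $T\to\infty$ with constants absorbed into $\Theta(\cdot)$. A secondary subtlety is that the regret bound is stated per prompt; I would verify that the supremum over policies can be pulled inside the expectation over $\xb$ using the pointwise decomposition $\pi = \{\pi(\cdot|\xb)\}_\xb$, so the per-prompt best responder assembles into a single global comparator policy.
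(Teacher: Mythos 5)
Your proposal is correct and takes essentially the same route as the paper's proof: realizability turns \eqref{eqn:sppo-win-rate} into the exact exponential-weight update, a Hedge-style regret bound applied with the self-play comparator sequence $\mu_t=\pi_t$ is combined with the constant-sum identities $\PP(\pi\succ\pi)=1/2$ and $\PP(\pi\prec\pi')=1-\PP(\pi\succ\pi')$, the KL to the (full-support, finite-vocabulary) initial policy is bounded by a finite constant, and $\eta=\Theta(1/\sqrt{T})$ balances the two error terms---the only difference being that the paper invokes Theorem 1 of \citet{freund1999adaptive} as a black box (in its $\eta/(1-e^{-\eta})$ form, followed by a Taylor expansion) whereas you re-derive the regret bound inline via the potential/telescoping argument and make explicit the assembly of per-prompt comparators into a global policy. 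One small nit that does not affect the rate: $e^x\le 1+x+x^2/2$ fails for $x$ near $1$, so use $e^x\le 1+x+x^2$ for $x\le 1$ (or Hoeffding's lemma), which only changes the constant in the $O(\eta)$ term.
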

Theorem~\ref{thm:nash} characterizes the convergence rate of the average policy across the time horizon $T$ towards the Nash equilibrium, in terms of the duality gap.
The proof is based on Theorem 1 in \citet{freund1999adaptive} with slight modification. For completeness, we include the proof in Appendix~\ref{sec:proof}.

Alternatively, we can avoid estimating $\log Z_{\hat{\pi}^K_t}(\xb)$ by replacing it with a constant based on the human preference model. The choice of the constant is discussed in detail in Appendix~\ref{sec:normalizing-factor}.
Here, we replace $\log Z_{\hat{\pi}^K_t}(\xb)$ with $\eta/2$\footnote{Assuming the winning probability between any given pair is either $1$ or $0$ with equal chance, when $K \rightarrow \infty$, we can show that indeed $Z_{\hat{\pi}^K_t}(\xb) \rightarrow e^{\eta/2}$. Also see Appendix~\ref{sec:normalizing-factor} for a complete derivation.} in \eqref{eqn:sppo-win-rate-finite} to obtain a more clear objective:
\begin{align}
    \pi_{t+1}
    & = 
    \argmin_{\pi}
    \EE_{\xb \sim \cX, \yb \sim {\pi}_t(\cdot|\xb)}
    \bigg(
    \log \bigg(\frac{\pi(\yb|\xb)}{\pi_{t}(\yb|\xb)}\bigg)
    -
    \eta \bigg(\PP(\yb \succ \hat{\pi}_t^K|\xb) - \frac{1}{2}
    \bigg)
    \bigg)^2. \label{eqn:sppo-win-rate-replaced}
\end{align}
Intuitively, if a tie occurs (i.e., $\PP(\yb \succ \hat{\pi}_t^K | \xb) = 1/2$), we prefer the model does not update weight at $\yb$. If $\yb$ wins over $\hat{\pi}_t^K$ on average (i.e., $\PP(\yb \succ \hat{\pi}_t^K | \xb) > 1/2$), then we increase the probability density at $\yb$ to employ the advantage of $\yb$ over $\hat{\pi}_t^K$. In our experiments, we choose to minimize the objective \eqref{eqn:sppo-win-rate-replaced}. 



\subsection{The SPPO Algorithm} 
Based on the aformentioned theoretical framework, we propose the \textit{Self-Play Preference Optimization} algorithm in Algorithm~\ref{alg:sppo}. 
\begin{algorithm}[t!]
\setcounter{ALC@unique}{0}
\caption{\texttt{\algname (SPPO)}}
\label{alg:sppo}
\begin{algorithmic}[1]
    \STATE \textbf{input}:
    base policy $\pi_{\btheta_1}$,
    preference oracle $\PP$,
    learning rate $\eta$, number of generated samples $K$.
    \FOR{$t=1,2,\dots$}
        \STATE Generate synthetic responses by sampling $\xb \sim \cX$ and $\yb_{1:K} \sim \pi_{t}(\cdot|\xb)$. \label{line:generate}
        \STATE Annotate the win-rate $\PP(\yb_k \succ \yb_{k'}|\xb), \forall k,k' \in [K]$. \label{line:annotate}
        \STATE Select responses from $\yb_{1:K}$ to form dataset $\cD_t = \{(\xb_i, \yb_i, \hat{P}(\yb_i \succ \pi_t |\xb_i)) \}_{i\in[N]}$.\label{line:select}
        \STATE Optimize $\pi_{\btheta_{t+1}}$ according to \eqref{eqn:sppo-win-rate-replaced}: \label{line:optimize}
        \begin{align}
            \btheta_{t+1}
            \leftarrow 
            \argmin_{\btheta}
            \EE_{(\xb, \yb, \hat{P}(\yb \succ \pi_t |\xb)) \sim \cD_t}
            \bigg(
            \log \bigg(\frac{\pi_{\btheta}(\yb|\xb)}{\pi_{t}(\yb|\xb)}\bigg)
            -
            \eta \bigg(\hat{P}(\yb \succ \pi_t|\xb) - \frac{1}{2}
            \bigg)
            \bigg)^2.
            \label{eqn:opt}
        \end{align}
    \ENDFOR
  \end{algorithmic}
\end{algorithm}
In each round $t$, Algorithm~\ref{alg:sppo} will first generate $K$ responses $\yb_1, \yb_2, \dots, \yb_K$ according to $\pi_t(\cdot|\xb)$ for each prompt $\xb$ (Line~\ref{line:generate}). Then, the preference oracle $\PP$ will be queried to calculate the win rate among the $K$ responses (Line~\ref{line:annotate}).  At Line~\ref{line:select}, certain criteria can be applied to determine which response should be kept in the constructed dataset $\cD_t$ and construct the prompt-response-probability triplet $(\xb, \yb, \hat{P}(\yb \succ \pi_t | \xb))$. We will discuss the design choices later in Section~\ref{sec:experiment}. One straightforward design choice is to include all $K$ responses into $\cD_t$ and each $\hat{P}(\yb_i \succ \pi_t | \xb)$ is estimated by comparing $\yb_i$ to all $K$ responses. In total, $O(K^2)$ queries will be made. Then the algorithm will optimize \eqref{eqn:sppo-win-rate-replaced} on the dataset $\cD_t$ (Line~\ref{line:optimize}).

\subsection{Connection to Policy Gradient}
While SPPO is derived from the iterative framework~\citep{freund1999adaptive} for two-player games, the square loss in the SPPO objective~\eqref{eqn:sppo-win-rate} provides an alternative interpretation for SPPO as a \textit{semi-online} variant of policy gradient method due to its special loss form. The difference from standard policy gradient is that it collects samples from $\pi_{\btheta_t}$ at the start of iteration $t$, rather than perform on-policy sampling at each gradient step. 

Consider a general reward function $r(\yb;\xb)$, the RLHF problem \eqref{eqn:RLHF} can be written as:
\begin{align}
    \max_{\btheta} 
    J(\btheta)
    :=
    \EE_{\xb \sim \cX, \yb \sim \pi_{\btheta}(\cdot|\xb)}
    \bigg[
    r(\yb; \xb)
    -
    \eta^{-1} 
    \log \frac{\pi_{\btheta}(\yb|\xb)}{\pi_{\text{ref}}(\yb|\xb)}
    \bigg]
    . \label{eqn:PG-obj}
\end{align}

The policy gradient of the objective $J(\btheta)$ is:
\begin{align}
    \nabla J(\btheta)
    & =
    \EE_{\xb \sim \cX, \yb \sim \pi_{\btheta}(\cdot|\xb)}
    \bigg[
    \bigg(
    r(\yb; \xb)
    -
    \eta^{-1} 
    \log \frac{\pi_{\btheta}(\yb|\xb)}{\pi_{\text{ref}}(\yb|\xb)}
    - b(\xb)
    \bigg)
    \nabla \log \pi_{\btheta}(\yb|\xb)
    \bigg]
    \label{eqn:PG-theorem} \\
    & =
    \eta
    \EE_{\xb \sim \cX, \yb \sim \pi_{\btheta}(\cdot|\xb)}
    \bigg[
    -\nabla
    \bigg(
    {\color{red}r(\yb; \xb)}
    -
    \eta^{-1} 
    \log \frac{\pi_{\btheta}(\yb|\xb)}{\pi_{\text{ref}}(\yb|\xb)}
    - 
    {\color{blue} b(\xb)}
    \bigg)^2
    \bigg], \label{eqn:PG-gradient}
\end{align}
where the first line follows the policy gradient theorem \citep{sutton1999policy} and the baseline $b(\xb)$ is an arbitrary constant relying only on $\xb$ used for variance reduction. Comparing the square loss \eqref{eqn:PG-gradient} with the SPPO objective \eqref{eqn:sppo-win-rate} (rewritten below):
\begin{align*}
    \btheta_{t+1}
    & = 
    \argmin_{\btheta}
    \EE_{\xb \sim \cX, \yb \sim \pi_{\btheta_t}(\cdot|\xb)}
    \bigg[
    \bigg(
    {\color{red} \PP(\yb \succ {\pi}_{\btheta_t}|\xb)}
    -
    \eta^{-1} \log \bigg(\frac{\pi_{\btheta}(\yb|\xb)}{\pi_{\btheta_t}(\yb|\xb)}\bigg)
    - 
    { \color{blue} \eta^{-1} \log Z_{{\pi}_{\btheta_t}}(\xb)}
    \bigg)^2
    \bigg],
\end{align*}
one can see that the win rate $\PP(\yb \succ {\pi}_{\btheta_t}|\xb)$ is exactly the reward SPPO aims to maximize, and $\eta^{-1} \log Z_{{\pi}_{\btheta_t}}(\xb)$ is in fact the best possible baseline--the (soft) value function. When the value function is not available in practice, it can be replaced by any constant baseline to reduce the variance of the policy gradient. We choose $1/2$ as a good approximation to $\eta^{-1} \log Z_{{\pi}_{\btheta_t}}(\xb)$ but the constant can vary depending on the human preference model (see Appendix~\ref{sec:normalizing-factor}).
Equation~\eqref{eqn:PG-theorem} is also discussed in \citet{munos2023nash}.

Comparing with the general framework proposed by \citet{swamy2024minimaximalist}, SPPO can be seen as a new, straightforward variant of policy gradient method without the need of extra modifications such as gradient clipping in PPO, Hessian calculation in TRPO, or maintaining multiple components (Q-critic, V-critic, actor, etc.) in many policy optimization algorithms. 

\subsection{Token-Level $Q^*$ Learning}
\citet{rafailov2024r} showed that under the Max-Entropy RL formulation, the token-level log-ratio $\log\frac{\pi_{\btheta}(\yb|\xb)}{\pi_{\text{ref}}(\yb|\xb)}$ can be seen as an implicit token-level reward or advantage function (invariant under reward shaping). 
Below we show the square loss in SPPO can also lead to the optimal Max-Entropy policy $\pi^*$, with token-level optimal value/advantage function. 

We first briefly restate the setting and results in \citet{rafailov2024direct}. The token-level MDP defines the state $\sbb_h= (\xb, y_1, y_2, \dots, y_{h-1})$ as the prefix tokens, and the action $\ab_h = y_{h}$ as the next token. An auto-regressive language model $\pi(\yb|\xb)$ can be viewed as a token-level policy $\pi(\ab_h|\sbb_h)$ and the transition kernel is known and deterministic because it only concatenates the next token to the prefix to form a new token sequence $\sbb_{h+1} = (\xb, y_1, y_2, \dots, y_h)$. 

The Max-Entropy RL setting again considers the reverse-KL regularized reward maximization problem \eqref{eqn:RLHF}:
\begin{align*} 
    \max_{\btheta} \ 
    & \EE_{\xb \sim \cX, \yb \sim \pi_{\btheta}(\cdot|\xb)}
    [
    r(\yb; \xb)
    ]
    -
    \eta^{-1}
    \EE_{\xb \sim \cX}
    [\mathrm{KL}(\pi_{\btheta}(\cdot|\xb) \| \pi_{\text{ref}}(\cdot|\xb))] \\
    =  & 
    \EE_{\xb \sim \cX, \yb \sim \pi_{\btheta}(\cdot|\xb)}
    [
    r(\yb; \xb) + \eta^{-1} \log \pi_{\text{ref}}(\yb|\xb)
    ]
    +
    \eta^{-1}
    \EE_{\xb \sim \cX}
    [\cH(\pi_{\btheta}(\cdot|\xb))].
\end{align*}
We denote the optimal solution for the problem above as $\pi^*$. \citet{rafailov2024r} showed that the Bradley-Terry preference model~\eqref{eqn:dpo} can be rewritten as:
\begin{align*}
    \PP(\yb_w \succ \yb_l | \xb)
    =
    \sigma
    \bigg(
    \eta^{-1}
    \sum_{h=1}^{|\yb_w|} \log \frac{\pi^*(\ab^w_h|\sbb^w_h)}{\pi_{\text{ref}}(\ab^w_h|\sbb^w_h)}
    -
    \eta^{-1}
    \sum_{h=1}^{|\yb_l|} \log \frac{\pi^*(\ab^l_h|\sbb^l_h)}{\pi_{\text{ref}}(\ab^l_h|\sbb^l_h)}
    \bigg),
\end{align*}
where the state and action is defined as in the token-level MDP introduced above, with superscription $(\cdot)^w$ and $(\cdot)^l$ denoting if it is for the winner $\yb_w$ or the loser $\yb_l$. And maximizing the log likelihood with  $\pi^*$ replaced by $\pi_{\btheta}$ gives the DPO loss.

From now on we assume the horizon is fixed at $H$ for simplicity. The derivation of the Max-Entropy RL formulation relies on the (soft) optimal value function $Q^*$ and $V^*$ as\footnote{
Here we restated with the sequence-level reward $r(\yb; \xb)$.
\citet{rafailov2024r} started their derivation from a ground-truth token-level reward $r(\sbb_h, \ab_h)$, which is under-specified due to the reward reshaping issue~\citep{ng1999policy}: reshaping the reward will not affect the Bradley-Terry preference probability so it is impossible to recover the ground-truth reward from the preference signal \citep[Section 4.2]{rafailov2024r}.}:
\begin{align*}
    V^*(\sbb_{H+1}) & = r(\sbb_{H+1}) := r(\yb; \xb), \text{ (reward at EOS)}
    \\
    Q^*(\sbb_h, \ab_h) & = 
    \eta^{-1} \log \pi_{\text{ref}}(\ab_h|\sbb_h)
    + V^*(\sbb_{h+1}), \\
    V^*(\sbb_h) & = \eta^{-1}
    \log \sum_{\ab} \exp\big(\eta Q^*(\sbb_h, \ab) \big), \text{ when } h \le H.
\end{align*}
\citet{rafailov2024r} showed that the optimal policy $\pi^*$ satisfies:
\begin{align*}
    \eta^{-1} \log \pi^*(\ab_h | \sbb_h )
    & =
    Q^*(\sbb_h, \ab_h) - V^*(\sbb_h)
    \\
    & = 
    \eta^{-1} \log \pi_{\text{ref}}(\ab_h | \sbb_h ) 
    +V^*(\sbb_{h+1}) - V^*(\sbb_h).
\end{align*}
It can be verified that for $\sbb_1 = (\xb)$, we have $\eta V^*(\sbb_1) = \log \sum_{\yb}\pi_{\text{ref}}(\yb|\xb) \exp\big(\eta r(\yb;\xb)\big)$:
\begin{align*}
    \exp \big( \eta V^*(\sbb_1) \big)
    & = \sum_{\ab_1} \exp\big(\eta Q^*(\sbb_1, \ab_1) \big)
    \\
    & = 
    \sum_{\ab_1} 
    \pi_{\text{ref}}(\ab_1|\sbb_1)
    \exp \big( \eta V^*(\sbb_{2}) \big)
    \\
    & =
    \sum_{\ab_1, \ab_2} 
    \pi_{\text{ref}}(\ab_1|\sbb_1)
    \pi_{\text{ref}}(\ab_2|\sbb_2)
    \exp \big( \eta V^*(\sbb_{3}) \big)
    \\
    & \cdots \\
    & =
    \sum_{(\ab_1, \ab_2, \dots, \ab_H)} 
    \prod_{h=1}^{H} \pi_{\text{ref}}(\ab_h|\sbb_h)
    \exp \big( \eta r(\sbb_{H+1}) \big)
    \\
    & =
    \sum_{\yb}\pi_{\text{ref}}(\yb|\xb) \exp\big(\eta r(\yb;\xb)\big).
\end{align*}

Going back to the SPPO objective~\eqref{eqn:sppo-win-rate} at $t$-th iteration, if we set $\pi_{\text{ref}} = \pi_t$ and $r(\yb;\xb) = \PP(\yb \succ \pi_t| \xb)$, we have $V^*(\sbb_1) = \eta^{-1} \log Z_{\pi_t}(\xb)$, and the learning objective at $t$-th iteration becomes:
\begin{align}
    \pi_{t+1}
    & = 
    \argmin_{\pi}
    \EE_{\xb \sim \cX, \yb \sim \pi_t(\cdot|\xb)}
    \bigg(
    \log \bigg(\frac{\pi(\yb|\xb)}{\pi_{t}(\yb|\xb)}\bigg)
    -
    \Big(\eta \PP(\yb \succ \pi_t|\xb)- \log Z_{\pi_t}(\xb)
    \Big)
    \bigg)^2 \notag\\
    & =
    \argmin_{\pi}
    \EE_{\sbb_1 \sim \cX, \ab_h \sim \pi_t(\cdot|\sbb_h)}
    \bigg(
    \sum_{h=1}^{H}
    \log \frac{\pi(\ab_h | \sbb_h)}{\pi^*(\ab_h | \sbb_h)}
    \bigg)^2.
    \label{eqn:sppo-win-rate-KL}
\end{align}
Similar to DPO, SPPO ``secretly'' encourages the policy $\pi_{\btheta}$ to converge to the optimal policy $\pi^*$ at token level via the square loss form~\eqref{eqn:sppo-win-rate-KL}.  Additionally, one may realize that minimizing the square-loss form is related to minimizing the KL divergence $\mathrm{KL}(\pi_{\btheta} \| \pi^*)$ via policy gradient:
\begin{align*}
\nabla_{\btheta} \mathrm{KL}(\pi_{\btheta} \| \pi^*)
& =
\EE_{\sbb_1 \sim \cX, \ab_h \sim \pi_{\btheta}(\cdot|\sbb_h)}
\bigg[
\bigg(
\sum_{h=1}^{H}
\log \frac{\pi_{\btheta}(\ab_h | \sbb_h)}{\pi^*(\ab_h | \sbb_h)}
\bigg) 
\sum_{h=1}^{H} \nabla_{\btheta} 
\log \pi_{\btheta}(\ab_h | \sbb_h) 
\bigg] \\
& =
\EE_{\sbb_1 \sim \cX, \ab_h \sim \pi_{\btheta}(\cdot|\sbb_h)}
\bigg[
\nabla_{\btheta} 
\bigg(
\sum_{h=1}^{H}
\log \frac{\pi_{\btheta}(\ab_h | \sbb_h)}{\pi^*(\ab_h | \sbb_h)}
\bigg)^2
\bigg].
\end{align*}

\subsection{Comparison with DPO, IPO, and KTO} \label{sec:comparison-with-DPO}
In practice, we utilize mini-batches of more than $2$ responses to estimate the win rate of a given response, while the DPO and IPO loss focus on a single pair of responses. When only a pair of responses $\yb_{w}$ and $\yb_{l}$ is available, we have the pair-wise symmetric loss based on the preference triplet $(\xb, \yb_{w}, \yb_{l})$ defined as:
\begin{align}
    \ell_{\textsc{SPPO}}(\xb, \yb_{w}, \yb_{l}; \btheta; \pi_{\text{ref}})
    &:=
    \bigg(
    \log \bigg(\frac{\pi_{\btheta}(\yb_{w}|\xb)}{\pi_{\text{ref}}(\yb_{w}|\xb)}\bigg)
    -
    \eta \Big(\PP(\yb_{w} \succ \yb_{l}|\xb)-\frac{1}{2}
    \Big)
    \bigg)^2
    \notag \\
    & \qquad \qquad +
    \bigg(
    \log \bigg(\frac{\pi_{\btheta}(\yb_{l}|\xb)}{\pi_{\text{ref}}(\yb_{l}|\xb)}\bigg)
    -
    \eta \Big(\PP(\yb_{w} \prec \yb_{l}|\xb)-\frac{1}{2}
    \Big)
    \bigg)^2, \label{eqn:sppo-loss}
\end{align}
where $\PP(\yb_{w} \succ \yb_{l}|\xb)$ can be either a soft probability within $[0,1]$ or a hard label $1$ indicating $\yb_{w} \succ \yb_{l}$.

We now compare the SPPO loss to other baselines assuming a hard label $\yb_{w} \succ \yb_{l}$ is given.
For the ease of comparison, let ($\beta = \eta^{-1}$):
\begin{align*}
    a = \beta \log \bigg( \frac{\pi_{\btheta}(\yb_{w} | \xb)}{\pi_{\text{ref}}(\yb_{w} | \xb)} \bigg),
    b = \beta \log \bigg( \frac{\pi_{\btheta}(\yb_{l} | \xb)}{\pi_{\text{ref}}(\yb_{l} | \xb)} \bigg),
    c = \beta \mathrm{KL}(\pi_{\btheta} \| \pi_{\text{ref}}),
\end{align*}
then we have 
\begin{align}
    \ell_{\text{DPO}}(\yb_{w}, \yb_{l}, \xb) & = - \log \sigma(a - b) , \label{eqn:dpo}\\
    \ell_{\text{IPO}}(\yb_{w}, \yb_{l}, \xb) & = [(a-b) - 1]^2, \label{eqn:ipo}\\
    \ell_{\text{KTO}}(\yb_{w}, \yb_{l}, \xb) & = \sigma(-a+c) + \sigma(b-c) \text{ (simplified)}, \label{eqn:kto}
\end{align}
where $\sigma(x) = e^x / (e^x+1)$ and the SPPO loss can be written as
\begin{align*}
    \ell_{\text{SPPO}}(\yb_{w}, \yb_{l}, \xb) = (a - 1/2)^2
    +
    (b + 1/2)^2.
\end{align*}

It can be seen that SPPO not only pushes the gap between $a$ and $b$ to be $1$, but also attempts to push the value of $a$ to be close to $1/2$ and the value of $b$ to be close to $-1/2$ so that $\pi_{\btheta}(\yb_{w} | \xb) > \pi_{\text{ref}}(\yb_{w} | \xb)$ and $\pi_{\btheta}(\yb_{l} | \xb) < \pi_{\text{ref}}(\yb_{l} | \xb)$. We believe this to be particularly important: when the preference pairs are scarce (e.g., one pair for each prompt), there is no guarantee that the winner log-ratio $a$ will increase and the loser log-ratio $b$ will decrease. Instead, only the gap between the winner and the loser (i.e., $a-b$) will increase. This phenomenon is observed by \citet{pal2024smaug} that DPO only lowers the loser's likelihood, but barely change the winner's likelihood. 

As discussed above, fitting $\beta \log \Big(\frac{\pi_{t+1}(\yb|\xb)}{\pi_{t}(\yb|\xb)}\Big)$ directly to $\PP(\yb \succ \pi_t | \xb) - 1/2$ under a square loss is closely related to the policy gradient.
This explains why SPPO is more effective than IPO which attempts to fit $\beta \log \Big(\frac{\pi_{t+1}(\yb_w|\xb)}{\pi_{t}(\yb_w|\xb)}\Big)-\beta \log \Big(\frac{\pi_{t+1}(\yb_l|\xb)}{\pi_{t}(\yb_l|\xb)}\Big)$ to $\PP(\yb_w \succ \pi_t | \xb) - \PP(\yb_l \succ \pi_t | \xb)$. In addition, SPPO shares a similar spirit as KTO. The KTO loss pushes $a$ to be large by minimizing $\sigma(-a+c)$ and pushes $b$ to be small by minimizing $\sigma(b-c)$. In contrast, SPPO pushes $a$ to be as large as $1/2$ and $b$ to be as small as $-1/2$. 

On the other hand, we would like to comment that although DPO and KTO can be extended to their iterative variants, they are not by nature iterative algorithms and do not have provable guarantees that they can reach the Nash equilibrium. In contrast, SPPO and IPO are by design capable to solve the Nash equilibrium iteratively. SPPO is superior to IPO because its design explicitly alleviates the data sparsity issue, as discussed above and detailed in \citet{pal2024smaug}.

\section{Experiments} \label{sec:experiment}

\subsection{Experiment Setup} \label{sec:exp-setup}

\paragraph{Base Model and Datasets} We follow the experimental setup of  Snorkel\footnote{\url{https://huggingface.co/snorkelai/Snorkel-Mistral-PairRM-DPO}}, a model that utilizes iterative DPO to achieve state-of-the-art performance on AlpacaEval benchmarks. Specifically, we use Mistral-7B-Instruct-v0.2 as our base model\footnote{\url{https://huggingface.co/mistralai/Mistral-7B-Instruct-v0.2}}. Mistral-7B-Instruct-v0.2 is an instruction fine-tuned version of Mistral-7B-v0.2 model \citep{jiang2023mistral}. We also adopt Ultrafeedback \citep{cui2023ultrafeedback}
as our source of prompts which includes around 60k prompts from diverse resources.
During generation, we follow the standard chat template of Mistral-7B. To avoid overfitting during the fine-tuning, we split the dataset into three portions and use only one portion per iteration. These settings were also adopted by training the model Snorkel-Mistral-PairRM-DPO\footnote{\url{https://huggingface.co/snorkelai/Snorkel-Mistral-PairRM-DPO}} (Snorkel). We follow the splitting in Snorkel for a fair comparison.
Additionally, we use Llama-3-8B-Instruct\footnote{\url{https://huggingface.co/meta-llama/Meta-Llama-3-8B-Instruct}} as a stronger base model along with the same preference dataset and data splitting. 

\paragraph{Preference Model}
We employ PairRM \citep{jiang2023llm}, an efficient pair-wise preference model of size 0.4B.
PairRM is based on DeBERTA-V3 \citep{he2021debertav3} and trained on high-quality human-preference datasets. Results on benchmarks like Auto-J Pairwise dataset \citep{li2023generative} show that it outperforms most of the language-model-based reward models and performs comparably with larger reward models like UltraRM-13B \citep{cui2023ultrafeedback}. We refer the readers to the homepage on Huggingface\footnote{\url{https://huggingface.co/llm-blender/PairRM}} for detailed benchmark results.
We therefore keep PairRM as our ranking model following Snorkel for a balance between accuracy and efficiency. 

Specifically, PairRM will output a ``relative reward'' $s(\yb, \yb' ; \xb)$ that reflects the strength difference between $\yb$ and $\yb'$,
i.e.,
\begin{align*}
    \PP(\yb \succ \yb' | \xb) = \frac{\exp(s(\yb, \yb' ; \xb))}{1 + \exp(s(\yb, \yb' ; \xb))}.
\end{align*}
Unlike the Bradley-Terry-based reward model, PairRM only assigns the relative reward which is not guaranteed to be transitive (i.e., $s(\yb_1,\yb_2;\xb) + s(\yb_2,\yb_3;\xb) \ne s(\yb_1,\yb_3;\xb)$). So it indeed models the general preference.

\paragraph{Response Generation and Selection} 
During the generation phase in each iteration, we use top $p=1.0$ and temperature $1.0$ to sample from the current policy. We sample with different random seeds to get $K=5$ different responses for each prompt. 
Previous works utilizing Iterative DPO choose $2$ responses to form a pair for each prompt. For a fair comparison, we do not include all $K=5$ responses in the preference data but choose two responses among them. 
Following Snorkel, we choose the winner $\yb_w$ and loser $\yb_l$ to be the response with the \textit{highest} and \textit{lowest} PairRM score, which is defined for each response $\yb_i$ as:
\begin{align*}
    s_{\text{PairRM}}(\yb_i ; \xb) & := \frac{1}{K} \sum_{k=1}^{K} s(\yb_i, \yb_k; \xb).
\end{align*}

\paragraph{Probability Estimation} We then estimate the win rate over the distribution by the average win rate over all the sampled responses as explained in \eqref{eqn:sppo-win-rate-finite}:
\begin{align*}
    \hat{P}(\yb_i \succ \pi_t |\xb)
    =
    \frac{1}{K} \sum_{k=1}^{K} \PP(\yb_i \succ \yb_k | \xb), \forall i \in [K].
\end{align*}

\paragraph{Hyperparameter Tuning} 
The experiments are conducted on 8 $\times$ Nvidia A100 GPUs. For SPPO, we trained three iterations in total. In each iteration, we selected the model trained on the first epoch of the 20k prompts from UltraFeedback to proceed to the next iteration.
For both Mistral-7B-Instruct-v0.2 and Llama-3-8B-Instruct, the global training batch size is set to 64, and $\eta$ is set to $1e3$. The learning rate schedule is determined by the following hyperparameters: learning rate=5.0e-7, number of total training epochs=18, warmup ratio=0.1, linear schedule.
The best hyper-parameters for each model are selected by the average win rate (judged by PairRM-0.4B) on a hold-out subset of Ultrafeedback as the metric. For more details on the win-rate comparison using PairRM as a judge, please refer to Section~\ref{sec:evaluation} and Figure~\ref{fig:pairrm-heatmap}.

\paragraph{Baselines} \label{paragraph:baseline}
We evaluate the following base models as well as baseline methods for fine-tuning LLMs:
\begin{itemize}[leftmargin=*]
\item Mistral-7B-Instruct-v0.2: Mistral-7B-Instruct-v0.2 is an instruction fine-tuned version of Mistral-7B-v0.2 model \citep{jiang2023mistral}. It is the starting point of our algorithm.
\item Snorkel (Mistral-PairRM-DPO): We directly evaluate the uploaded checkpoint on HuggingFace\footnote{\url{https://huggingface.co/snorkelai/Snorkel-Mistral-PairRM-DPO}}. This model is obtained by three rounds of iterative DPO from Mistral-7B-Instruct-v0.2.
\item (Iterative) DPO: We also implement the iterative DPO algorithm by ourselves. The experimental settings and model selection schemes align with those used for SPPO, except for the adoption of the DPO loss function as defined in \eqref{eqn:dpo}. Hyperparameters are optimized to maximize the average win-rate assessed by PairRM at each iteration. Note that the practical algorithm in \citet{rosset2024direct} is essentially the same as iterative DPO. 
\item (Iterative) IPO: We implement the iterative IPO algorithm by ourselves. The experimental setting and the model selection scheme is the same as iterative DPO, except that the loss function is the IPO loss \eqref{eqn:ipo}. For fair comparison, hyperparameters for IPO is also selected by evaluation using the average PairRM win-rate on the hold-out subset of Ultrafeedback.
\item Self-rewarding LM: \citet{yuan2024self} proposed to prompt the LLM itself as a preference judge to construct new preference pairs and iteratively fine-tune the LLM with the DPO algorithm.  We use the AlpacaEval 2.0 win rate reported by \citet{yuan2024self} for comparison. Note that Self-rewarding LM is a trained from Llama 2 70B.

\item Llama-3-8B-Instruct: Llama-3-8B-Instruct is an instruction-tuned model optimized for dialogue use cases and outperforms many of the available open-source chat models on common industry benchmarks.
\end{itemize}

\paragraph{Benchmarks} Following previous works, we use AlpacaEval 2.0 \citep{dubois2024length}, Arena-Hard\citep{li2024crowdsourced}, MT-Bench \citep{zheng2024judging}, and Open LLM Leaderboard \citep{open-llm-leaderboard} as our evaluation benchmarks. 
\begin{itemize}[leftmargin=*]
\item \textbf{AlpacaEval 2.0} is an LLM-based automatic evaluation benchmark. It employs AlpacaFarm \citep{dubois2024alpacafarm} as its prompts set composed of general human instructions. The model responses and the reference response generated by GPT-4-Turbo are fed into a GPT-4-Turbo-based annotator to be judged. We follow the standard approach and report the win rate over the reference responses.

\item \textbf{Arena-Hard} \citep{li2024crowdsourced} is a high-quality benchmark that claims to be harder and has the highest correlation and separability to Chatbot Arena among popular open-ended LLM benchmarks including AlpacaEval 2.0. We evaluate our models Mistral-PairRM-SPPO and the baseline models. 

\item \textbf{MT-Bench} \citep{zheng2024judging} is a collection of 80 high-quality multi-turn open-ended questions. The questions cover topics like writing, role-playing, math, coding, etc.. The generated answer is judged by GPT-4 and given a score directly without pairwise comparison.

\item \textbf{Open LLM Leaderboard} \citep{open-llm-leaderboard} consists of six datasets, each of which focuses on a facet of language model evaluation. In detail, the evaluation rubric includes math problem-solving, language understanding, human falsehood mimicking, and reasoning. We follow the standard evaluation process and use in-context learning to prompt the language model and compute the average score over six datasets to measure the performance.
\end{itemize}

\begin{table}[t!]
    \centering
    \caption{AlpacaEval 2.0 evaluation of various models (detailed in \nameref{paragraph:baseline}) in terms of both normal and length-controlled (LC) win rates in percentage (\%). Mistral-7B-SPPO Iter3 model achieves the highest LC win rate of 28.53\% and a normal win rate of 31.02\%. SPPO demonstrates steady performance gains across iterations and outperforms other baselines which show a tendency to produce longer responses. Additionally, re-ranking with the PairRM reward model (best-of-16) at test time consistently enhances the performance across all models and SPPO (best-of-16) achieves high win rate \textit{without strong external supervision like GPT-4}. We additionally include the results obtained from fine-tuning Llama-3-8B-Instruct, which also show steady performance improvement.
    }
\begin{tabular}{l | c c | c c  c }
\toprule\multirow{2}{*}{Model} & \multicolumn{3}{c}{AlpacaEval 2.0}   \\
& LC Win Rate  & Win Rate & Avg. Len \\
\midrule
Mistral-7B-Instruct-v0.2  & 17.11 & 14.72 & 1676  \\
Mistral-7B-Instruct-v0.2 (best-of-16)  & 22.45 & 17.94 & 1529   \\
\midrule

Snorkel (Mistral-PairRM-DPO) & 26.39 & 30.22 & 2736  \\
Snorkel (Mistral-PairRM-DPO best-of-16)& 29.97 & 34.86 & 2616  \\
\midrule 
Self-Rewarding 70B Iter1 & - & 9.94 & 1092 \\
Self-Rewarding 70B Iter2 & - & 15.38 & 1552 \\
Self-Rewarding 70B Iter3 & - & 20.44 & 2552 \\
\midrule
Mistral-7B-DPO Iter1   & 23.81 & 20.44 & 1723   \\
Mistral-7B-DPO Iter2   & 24.23 & 24.46 & 2028   \\
Mistral-7B-DPO Iter3  & 22.30 & 23.39 & 2189   \\
\midrule
Mistral-7B-IPO Iter1   & 23.78 & 20.77 & 1693   \\
Mistral-7B-IPO Iter2   & 21.08 & 23.38 & 2660   \\
Mistral-7B-IPO Iter3  & 20.06 &22.47 & 2760   \\
\midrule
\rowcolor{LightCyan}Mistral-7B-SPPO Iter1  & $24.79_{\textcolor{red}{\textbf{(+7.69)}}}$ & $23.51_{\textcolor{red}{\textbf{(+8.79)}}}$ & 1855   \\
\rowcolor{LightCyan}Mistral-7B-SPPO Iter2  & $26.89_{\textcolor{red}{\textbf{(+2.10)}}}$ & $27.62_{\textcolor{red}{\textbf{(+4.11)}}}$ & 2019   \\
\rowcolor{LightCyan}Mistral-7B-SPPO Iter3  & $\textbf{28.53}_{\textcolor{red}{\textbf{(+1.64)}}}$ & $\textbf{31.02}_{\textcolor{red}{\textbf{(+3.40)}}}$ & 2163   \\
\midrule
\rowcolor{LightCyan}Mistral-7B-SPPO Iter1 (best-of-16)  & $28.71_{\textcolor{red}{\textbf{(+6.26)}}}$ & $27.77_{\textcolor{red}{\textbf{(+9.83)}}}$ & 1901 \\
\rowcolor{LightCyan}Mistral-7B-SPPO Iter2 (best-of-16)& $31.23_{\textcolor{red}{\textbf{(+2.52)}}}$ & $32.12_{\textcolor{red}{\textbf{(+4.35)}}}$ & 2035   \\
\rowcolor{LightCyan}Mistral-7B-SPPO Iter3 (best-of-16) & $\textbf{32.13}_{\textcolor{red}{\textbf{(+0.9)}}}$ & $\textbf{34.94}_{\textcolor{red}{\textbf{(+2.82)}}}$ & 2174   \\
\midrule
\midrule
Llama-3-8B-Instruct  & 22.92 & 22.57 &  1899 \\
\midrule
\rowcolor{LightCyan}Llama-3-8B-SPPO Iter1  & $31.73_{\textcolor{red}{\textbf{(+8.81)}}}$ & $31.74_{\textcolor{red}{\textbf{(+9.17)}}}$ &  1962  \\
\rowcolor{LightCyan}Llama-3-8B-SPPO Iter2  & $35.15_{\textcolor{red}{\textbf{(+3.42)}}}$ & $35.98_{\textcolor{red}{\textbf{(+4.24)}}}$ &  2021  \\
\rowcolor{LightCyan}Llama-3-8B-SPPO Iter3  & $\textbf{38.77}_{\textcolor{red}{\textbf{(+3.62)}}}$ & $\textbf{39.85}_{\textcolor{red}{\textbf{(+3.87)}}}$ &   2066 \\
\bottomrule
\end{tabular}
    \label{tab:result-alpaca}
\end{table}

\begin{table}[t!]
    \centering
    \caption{AlpacaEval 2.0 leaderboard results of both normal and length-controlled (LC) win rates in percentage (\%). Mistral-7B-SPPO can outperform larger models and Mistral-7B-SPPO (best-of-16) can outperform proprietary models such as GPT-4(6/13). Llama-3-8B-SPPO exhibits even better performance.}
\begin{tabular}{l | c c  }
\toprule
\multirow{2}{*}{Model} & \multicolumn{2}{c}{AlpacaEval 2.0}   \\
& LC. Win Rate  & Win Rate  \\
\midrule
GPT-4 Turbo & 50.0 & 50.0 \\
Claude 3 Opus & 40.5 & 29.1  \\
\rowcolor{LightCyan} Llama-3-8B-SPPO Iter3 & 38.8 & 39.9 \\
GPT-4 0314  & 35.3 & 22.1  \\
Llama 3 70B Instruct & 34.4 & 33.2 \\
\rowcolor{LightCyan} Mistral-7B-SPPO Iter3 (best-of-16) & 32.1 & 34.9    \\
GPT-4 0613 & 30.2 & 15.8   \\
Snorkel (Mistral-PairRM-DPO best-of-16) & 30.0 & 34.9   \\
Mistral Medium  & 28.6 & 21.9   \\
\rowcolor{LightCyan} Mistral-7B-SPPO Iter3  & 28.5 & 31.0    \\
Claude 2 & 28.2 & 17.2 \\
Snorkel (Mistral-PairRM-DPO) & 26.4 & 30.2  \\
Gemini Pro & 24.4 & 18.2 \\
Mistral 8$\times$7B v0.1 & 23.7 & 18.1 \\
Llama 3 8B Instruct & 22.9 & 22.6 \\
\bottomrule
\end{tabular}
    \label{tab:result-alpaca-mt}
\end{table}

\subsection{Experimental Results} \label{sec:evaluation}


\paragraph{Evaluation using GPT-4 as a judge}
Human evaluation remains the benchmark for quality and accuracy \citep{askell2021general,ouyang2022training}. However, due to its limitations in scalability and reproducibility, we explore the alternative approach of using the advanced capabilities of GPT-4 \citep{achiam2023gpt} as an automatic evaluation tool. We conduct GPT-4-based automatic evaluation on AlpacaEval 2.0 \citep{alpaca_eval}, MT-Bench \citep{zheng2023judging}, and Arena-Hard \citep{li2024crowdsourced} to measure the chatbot capability of our model. 
The results can be found in Table~\ref{tab:result-alpaca} for AlpacaEval 2.0, Figure~\ref{fig:mt-all} (left) for MT-Bench, and Figure~\ref{fig:mt-all} (right) for Arena-Hard. 
We found that the performance of SPPO models consistently improves throughout all iterations.

Table~\ref{tab:result-alpaca} (AlpacaEval 2.0) shows the win rate over the GPT-4-Turbo baseline of different models on 805 prompts. We also include one column indicating the length-controlled win rate, and one column on the average length of each model, to account for the tendency of the LLM-based judge to favor longer sequence outputs --- an issue colloquially termed the "reward hacking" phenomenon. According to the table, Mistral-7B-SPPO Iter3 has the highest win rate, 28.52\% for the length-controlled version, and 31.02\% for the overall win rate.

\begin{figure}[t!]
\centering
\resizebox{0.85\linewidth}{!}{
\hfill
\subfigure[]{\includegraphics[width=0.49\textwidth]{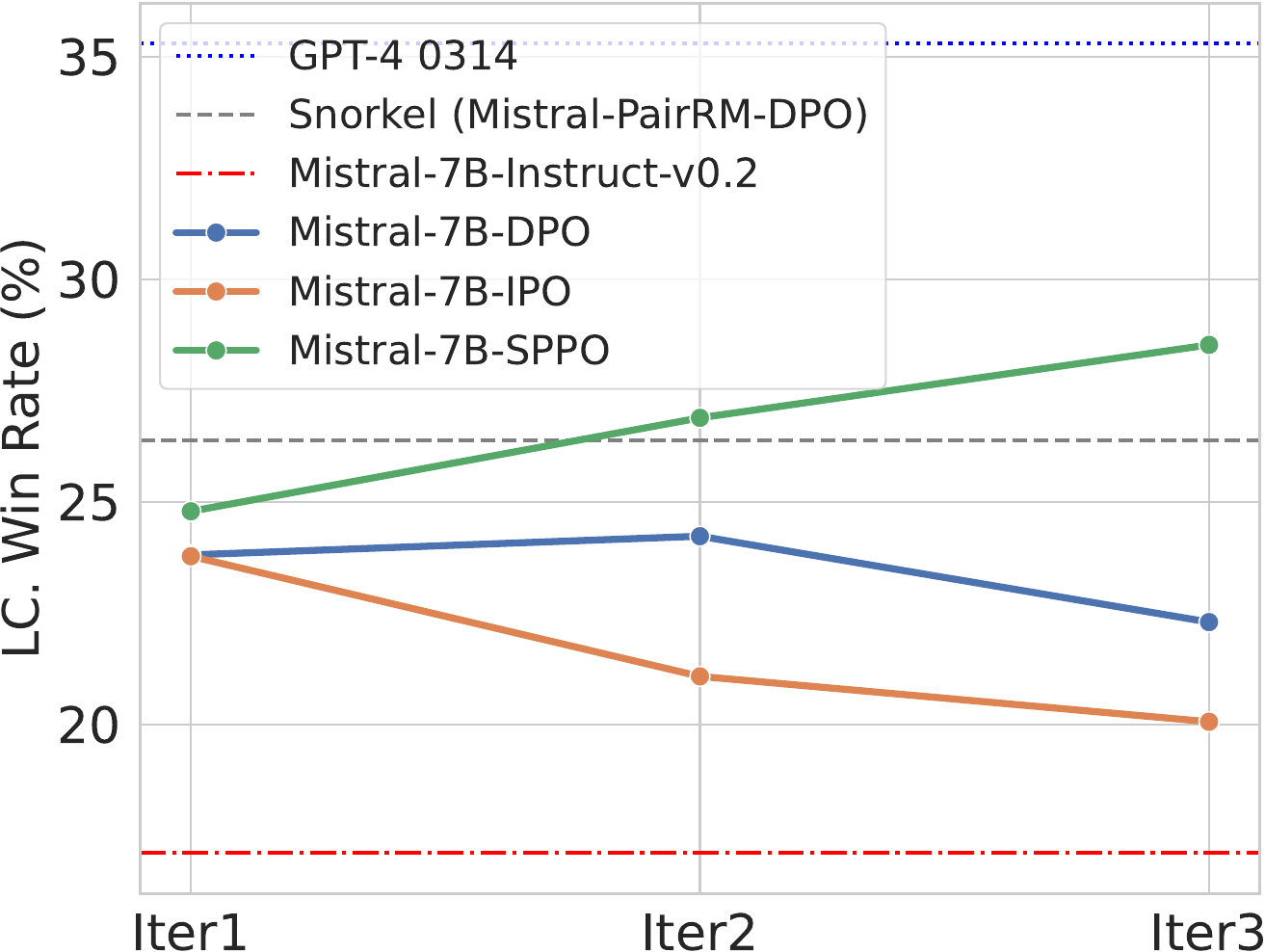}}
\hfill
\subfigure[]{\includegraphics[width=0.49\textwidth]{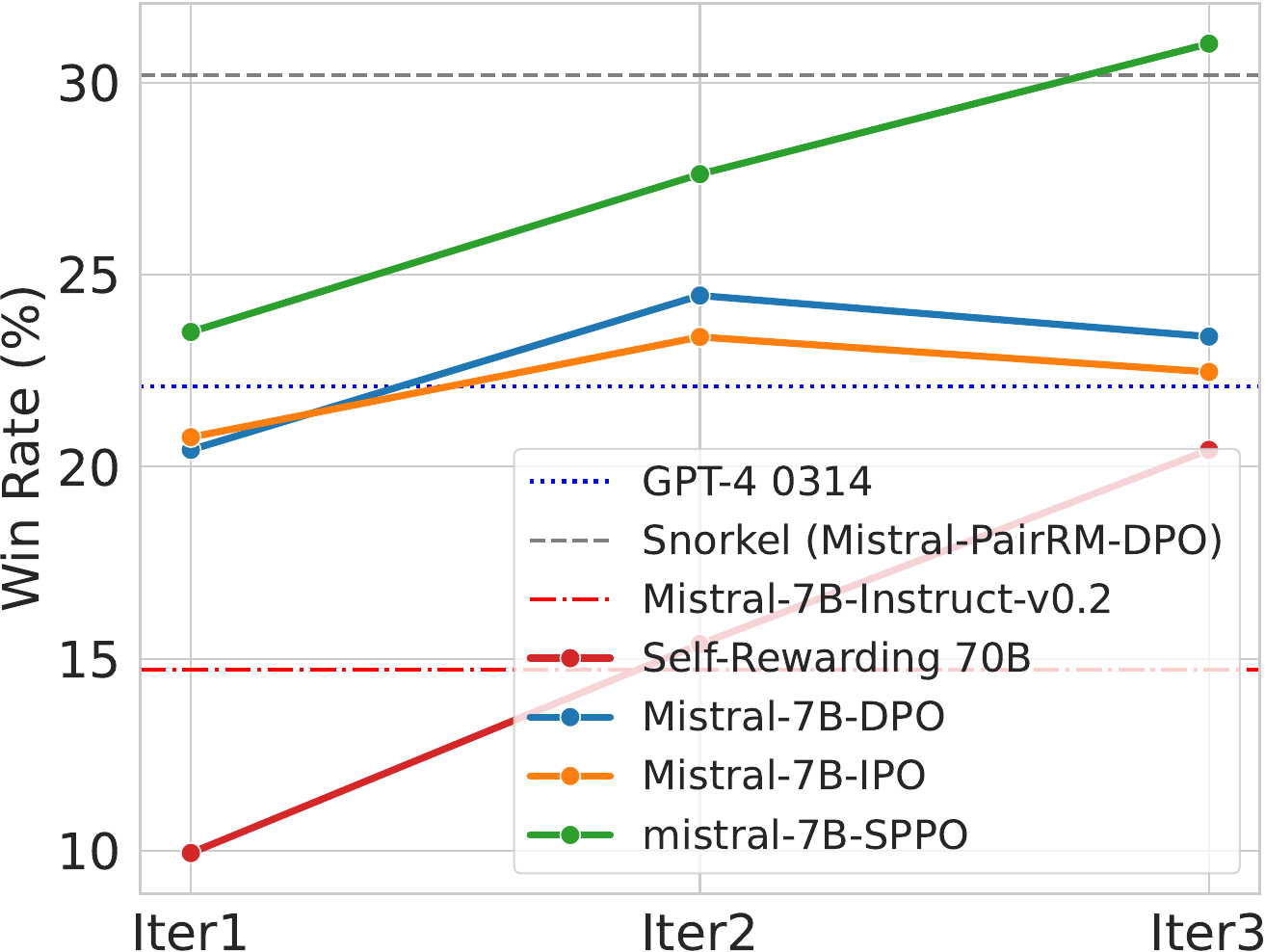}}
\hfill
}
\caption{Win Rate against GPT-4-Turbo with (a) and without (b) Length Controlling (LC) on AlpacaEval 2.0. SPPO demonstrates steady improvements on both LC and raw win rates.}
\label{fig:win-rate-plot}
\end{figure}

The performance gains over previous iterations are 7.69\% (Mistral-7B-Instruct $\rightarrow$ Iter1), 2.10\% (Iter1 $\rightarrow$ Iter2), and 1.64\% (Iter2 $\rightarrow$ Iter3), respectively, indicating steady improvements across iterations, as illustrated in Figure~\ref{fig:win-rate-plot}. 
We also apply SPPO to a stronger baseline model, i.e., Llama-3-8B-Instruct, and the fine-tuned model Llama-3-8B-SPPO has a higher length-controlled win rate 38.77\% and overall win rate 39.85\%. The performance gains are more significant: 8.81\% (Llama-3-8B-Instruct $\rightarrow$ Iter1), 3.42\% (Iter1 $\rightarrow$ Iter2), and 3.62\% (Iter2 $\rightarrow$ Iter3), summing up to a total gain of 15.85\%. 

Additionally, the result indicates that SPPO achieves superior performance compared to the iterative variants of DPO and IPO. The length-controlled win rate for SPPO reaches 28.53\%, outperforming the DPO's best rate of 26.39\% (by Snorkel) and IPO's rate of 25.45\%. 
Notably, while DPO and IPO training tend to significantly increase the average output length—2736 and 2654, respectively—SPPO shows a more moderate length increase, moving from 1676 in the base model to 2163 at the third iteration. 
Finally, we present the best-of-16 results for each model, selected using the PairRM reward model.
We find that re-ranking with the preference model at test time can consistently improve the performance of base model (Mistral-7B-Instruct-v0.2), DPO (Snorkel), and SPPO (Iter3) by 5.34\%, 3.57\%, and 3.6\%, respectively.
Notably, this shows that while SPPO significantly enhances model alignment using PairRM-0.4B as the sole external supervision, it has not resulted in over-optimization against the preference model \citep{gao2023scaling}. 

\setlength{\tabcolsep}{3pt}
\begin{figure}
    \centering
\begin{minipage}{0.45\textwidth}
\resizebox{\linewidth}{!}{
\begin{tabular}{l | c c | c}
\toprule
\multirow{2}{*}{Model} & \multicolumn{3}{c}{MT-Bench}   \\
& 1st Turn & 2nd Turn & Average \\
\midrule
Mistral-7B-Instruct-v0.2  & 7.78 & 7.25& 7.51  \\
Snorkel (Mistral-PairRM-DPO) & 7.83 & 7.33 & 7.58  \\
\midrule 
Mistral-7B-DPO Iter1  &7.45 & 6.58&7.02 \\
Mistral-7B-DPO Iter2  &7.57 & 6.56 & 7.06 \\
Mistral-7B-DPO Iter3  & 7.49 & 6.69 & 7.09 \\
\midrule 
\rowcolor{LightCyan}Mistral-7B-SPPO Iter1  & 7.63 & 6.79 & 7.21   \\
\rowcolor{LightCyan}Mistral-7B-SPPO Iter2  & 7.90 & 7.08 & 7.49   \\
\rowcolor{LightCyan} Mistral-7B-SPPO Iter3  & 7.84 & 7.34 & \textbf{7.59}  \\
\bottomrule
\end{tabular}
}
\end{minipage}
\begin{minipage}{0.54\textwidth}
\resizebox{\linewidth}{!}{
\begin{tabular}{l | c}
\toprule
\multirow{2}{*}{Model} & \multirow{2}{*}{Arena-Hard-Auto-v0.1}   \\
\\
\midrule
Mistral-7B-Instruct  & 12.6  \\
\midrule
Snorkel (Mistral-PairRM-DPO) & 20.7  \\
\midrule 
\rowcolor{LightCyan}Mistral-7B-SPPO Iter1  & 18.7
   \\
\rowcolor{LightCyan}Mistral-7B-SPPO Iter2  & 20.4   \\
\rowcolor{LightCyan} Mistral-7B-SPPO Iter3  & \textbf{23.3}  \\
\bottomrule
\end{tabular}
}
\end{minipage}
\caption{\textbf{MT-Bench \& Arena-Hard Evaluation.} Left: Mistral-7B-SPPO Iter3 outperforms all baseline models by achieving an average score of 7.59 in MT-Bench. Despite initial drops in performance in the first two iterations, SPPO Iter3 improves upon the base model by the final iteration. 
Right: Mistral-7B-SPPO Iter3 outperforms the baseline model Snorkel(Mistral-PairRM-DPO) in Arena-Hard. The improvement across different iterations is consistent.}
\label{fig:mt-all}
\end{figure}
\setlength{\tabcolsep}{6pt}

In Table~\ref{tab:result-alpaca-mt}, we compare SPPO on the AlpacaEval 2.0 leaderboard with other state-of-the-art AI chatbots. We found our SPPO model outperforms many competing models trained on proprietary alignment data (e.g., Claude 2, Gemini Pro, \& Llama 3 8B Instruct). When applied to Llama 3 8B Instruct, our Llama-3-8B-SPPO exhibits an even higher win rate. With test-time reranking, Mistral-7B-SPPO Iter3 (best-of-16) is even competitive to GPT-4 0613 and Llama 3 70B Instruct. 

In Figure~\ref{fig:mt-all} (left), we evaluate the performance of SPPO on MT-Bench. We can see that Mistral-7B-SPPO Iter3 outperforms all baseline models, achieving an average score of 7.59.
While we are not certain why the MT-Bench performance drops at the first two iterations, the performance of SPPO at the final iteration still improves over the base model.



\begin{table}[t!]
    \centering
    \caption{\textbf{Open LLM Leaderboard Evaluation}. SPPO fine-tuning improves the base model's performance on different tasks, reaching a state-of-the-art average score of 66.75 for Mistral-7B and 70.29 for Llama-3-8B. For Mistral-7B, subsequent iterations of DPO, IPO, and SPPO see a decline in performance. It is possible that aligning with human preferences (simulated by the PairRM preference model in our study) may not always enhance, and can even detract from, overall performance.}
    \resizebox{0.99\textwidth}{!}{%
    \begin{tabular}{l | c c c c c c c }
    \toprule
        Models & Arc & TruthfulQA & WinoGrande & GSM8k & HellaSwag & MMLU & Average \\
        \midrule
Mistral-7B-Instruct-v0.2 &63.65 &	66.85	& 77.98	& 41.93	& 84.89	& 59.15	& 65.74 \\
\midrule
Snorkel & 66.04 & 70.86 & 77.74 & 36.77 & 85.64 & 60.83 & 66.31 \\ 
\midrule
Mistral-7B-DPO Iter1  &	63.14	& 68.39	& 77.19	& 40.33	& 85.25	& 59.41	& 65.62 \\
Mistral-7B-DPO Iter2 	& 64.16	& 67.84	& 76.09	& 39.95	& 85.23	& 59.03	& 65.38 \\
Mistral-7B-DPO Iter3       & 65.19  & 67.89 & 77.27  & 32.30  & 85.49 & 59.00  & 64.52   \\
\midrule
Mistral-7B-IPO Iter1    &64.68	&68.60	&77.98	&43.75	&85.08	&59.04	&66.52    \\
Mistral-7B-IPO Iter2      &62.12	&66.30	&77.51	&39.20	&83.15	&59.70	&64.66   \\
Mistral-7B-IPO Iter3       & 62.97  & 67.12 & 77.51  & 37.45  & 83.69 & 59.57  & 64.72   \\
\midrule
\rowcolor{LightCyan}Mistral-7B-SPPO Iter1 & 65.02 & 69.40 & 77.82 & 43.82 & 85.11 & 58.84 & 66.67 \\
\rowcolor{LightCyan}Mistral-7B-SPPO Iter2 & 65.53 & 69.55 & 77.03 & 44.35 & 85.29 & 58.72 & \textbf{66.75} \\
\rowcolor{LightCyan} Mistral-7B-SPPO Iter3 & 65.36 & 69.97 & 76.80 & 42.68 & 85.16 & 58.45 & 66.40 \\
\midrule
\midrule
Llama-3-8B-Instruct & 62.29 & 51.65 & 76.09 & 75.89 & 78.73 & 65.59 & 68.37 \\
\midrule
\rowcolor{LightCyan}Llama-3-8B-SPPO Iter1 & 63.82 & 54.96 & 76.40 & 75.44 & 79.80 & 65.65 & 69.35 \\
\rowcolor{LightCyan}Llama-3-8B-SPPO Iter2 & 64.93 & 56.48 & 76.87 & 75.13 & 80.39 & 65.67 & 69.91 \\
\rowcolor{LightCyan}Llama-3-8B-SPPO Iter3 & 65.19 & 58.04 & 77.11 & 74.91 & 80.86 & 65.60 & \textbf{70.29} \\
    \bottomrule
    \end{tabular}%
    }
    \label{tab:result-leaderboard}
\end{table}

Arena-Hard \citep{li2024crowdsourced} contains 500 challenging user queries and follow the same evaluation method as AlpacaEval 2.0. 
In Figure~\ref{fig:mt-all} (right), we evaluate the performance of SPPO on Arena-Hard. 
We can see that Mistral-7B-SPPO exhibits a steady performance gain across iterations.Mistral-7B-SPPO Iter 3 outperforms the baseline models, achieving an average score of 23.3.

\paragraph{Open LLM Leaderboard}

We further evaluate the capabilities of SPPO models using Huggingface Open LLM Leaderboard \citep{beeching2023open}. This leaderboard encompasses 6 different datasets, each focusing on a 
specific capability of LLMs: Arc \citep{clark2018think}, HellaSwag \citep{zellers2019hellaswag}, Winogrande \citep{sakaguchi2021winogrande}, MMLU \citep{hendrycks2020measuring}, TruthfulQA \citep{lin2021truthfulqa}, and GSM8k \citep{cobbe2021training}. The models are prompted with zero or few-shot exemplars.
The results, presented in Table~\ref{tab:result-leaderboard}, demonstrate that SPPO can enhance the performance of the base model on Arc, TruthfulQA, and GSM8k, and achieve the state-of-the-art performance with an averagte score of 66.75.
However, these improvements do not hold in subsequent alignment iterations: DPO, IPO, and SPPO's performance declines after the first or second iterations.
This limitation may be attributed to the ``alignment tax'' phenomenon \citep{askell2021general}, which suggests that aligning with human preferences (simulated by PairRM preference in our study) might not improve or even hurt the general performance.
Improving language model capabilities through alignment iterations remains a topic for future research, and we posit that incorporating high-quality SFT annotations \citep{chen2024self} could play a significant role in this endeavor.

\paragraph{Evaluation using PairRM as a judge}

\begin{figure}[t!]
    \centering
    \includegraphics[width=0.9\textwidth]{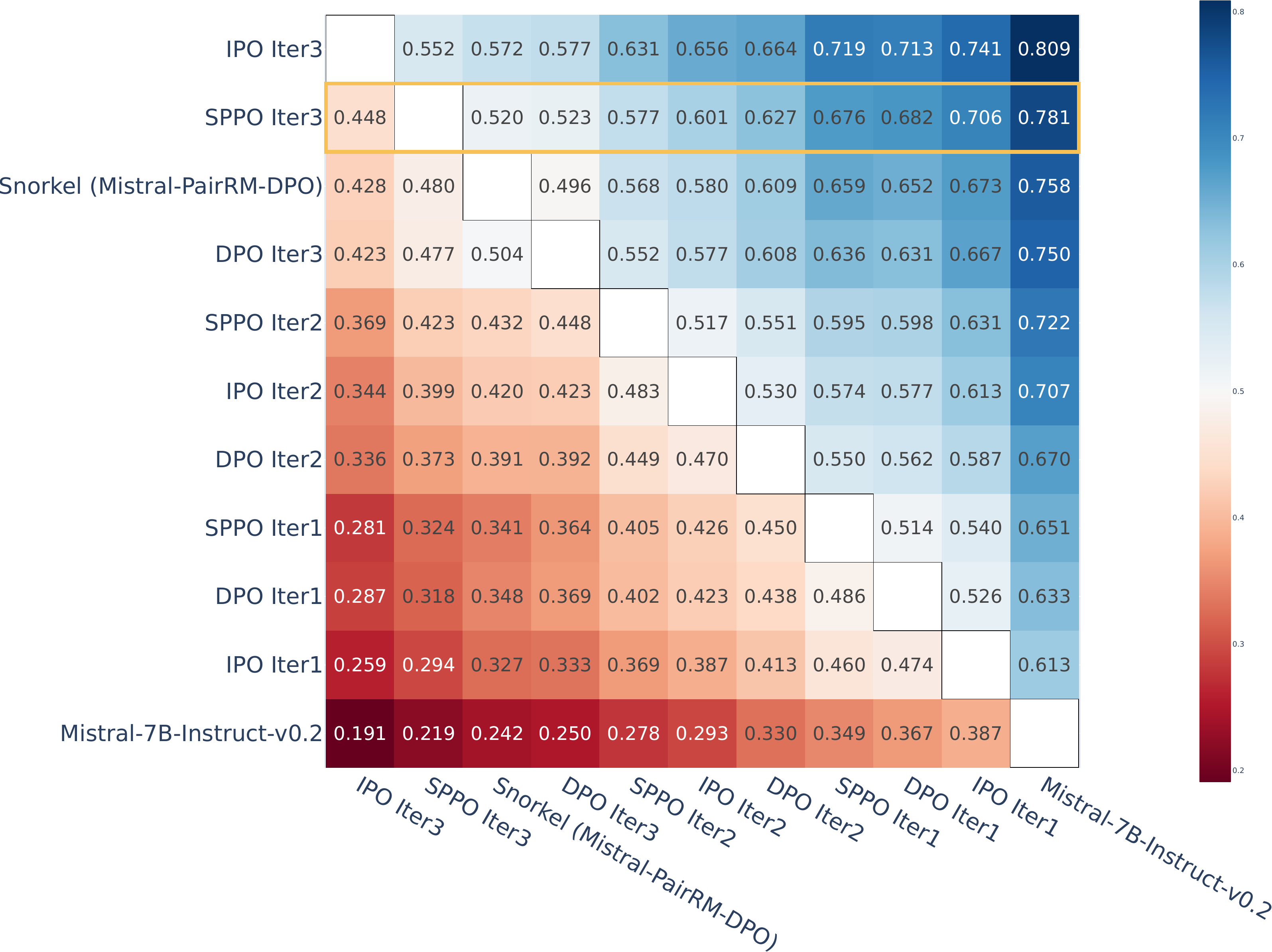}
    \caption{Pairwise win rates among base model (Mistral-7B-Instruct-v0.2), DPO models, IPO models, and SPPO models using \textbf{PairRM-0.4B} as a judge, which may favor models with longer outputs.
    On benchmarks with more powerful judge models (e.g., GPT-4), such as AlpacaEval 2.0 and MT-Bench, SPPO outperforms other baseline algorithms by a large margin.
    }
\label{fig:pairrm-heatmap}
\end{figure}

As SPPO identifies the von Neumann winner (see~\eqref{eqn:game}) in a two-player constant-sum game, we examine the pairwise preferences among SPPO models and other baselines. The pairwise win rates, measured by PairRM, are depicted in Figure~\ref{fig:pairrm-heatmap}. We observe that in all algorithms—namely DPO, IPO, and SPPO—the newer model iterations surpass the previous ones. For example, SPPO Iteration 3 outperforms SPPO Iteration 2. Both SPPO and IPO consistently outperform DPO across all iterations. While SPPO is superior to IPO in the first two iterations, IPO exceeds SPPO in performance during the final iteration.
Considering the superior performance of SPPO in standard benchmarks evaluated by GPT-4 or against ground-truth answers (e.g., AlpacaEval 2.0, MT-Bench, and Open LLM Leaderboard), along with IPO's tendency to produce longer sequence outputs (see Avg. Len in Table~\ref{tab:result-alpaca}), we believe this is due to IPO exploiting the length bias in PairRM that favors longer sequences. Conversely, SPPO models benefit from a more robust regularization within a multiplicative weight update framework.

\subsection{Ablation Study}


\setlength{\tabcolsep}{4pt}
\begin{figure}[hbt!]
    \centering
\begin{minipage}{0.52\textwidth}
\resizebox{\linewidth}{!}{
\begin{tabular}{c | c | c c | c c  c }
\toprule
\multirow{3}{*}{\makecell{Mini-Batch\\Size}} &\multirow{3}{*}{Iteration} & \multicolumn{3}{c}{AlpacaEval 2.0}   \\
& & \multicolumn{2}{c|}{Win Rate}  & \multirow{2}{*}{\makecell{Avg. Len\\(chars)}} \\
& & LC. & \multicolumn{1}{c|}{Raw} & \\
\midrule
\multirow{3}{*}{$K=2$} & Iter1  & 23.85 & 23.53 & 1948 \\
& Iter2  & 26.91 & 27.24 & 1999 \\
&Iter3  & 28.26 & 28.22 & 1961  \\
\midrule
 \rowcolor{LightCyan}& Iter1 & 24.79 & 23.51 & 1855   \\
\rowcolor{LightCyan} &Iter2  & 26.89 & 27.62 & 2019   \\
\rowcolor{LightCyan}\multirow{-3}{*}{$K=5$}&Iter3 & \textbf{28.53} & \textbf{31.02} & 2163   \\
\bottomrule
\end{tabular}
}
\end{minipage}
\hfill
\begin{minipage}{0.45\textwidth}
\begin{subfigure}
    \centering
    \includegraphics[width=\textwidth]{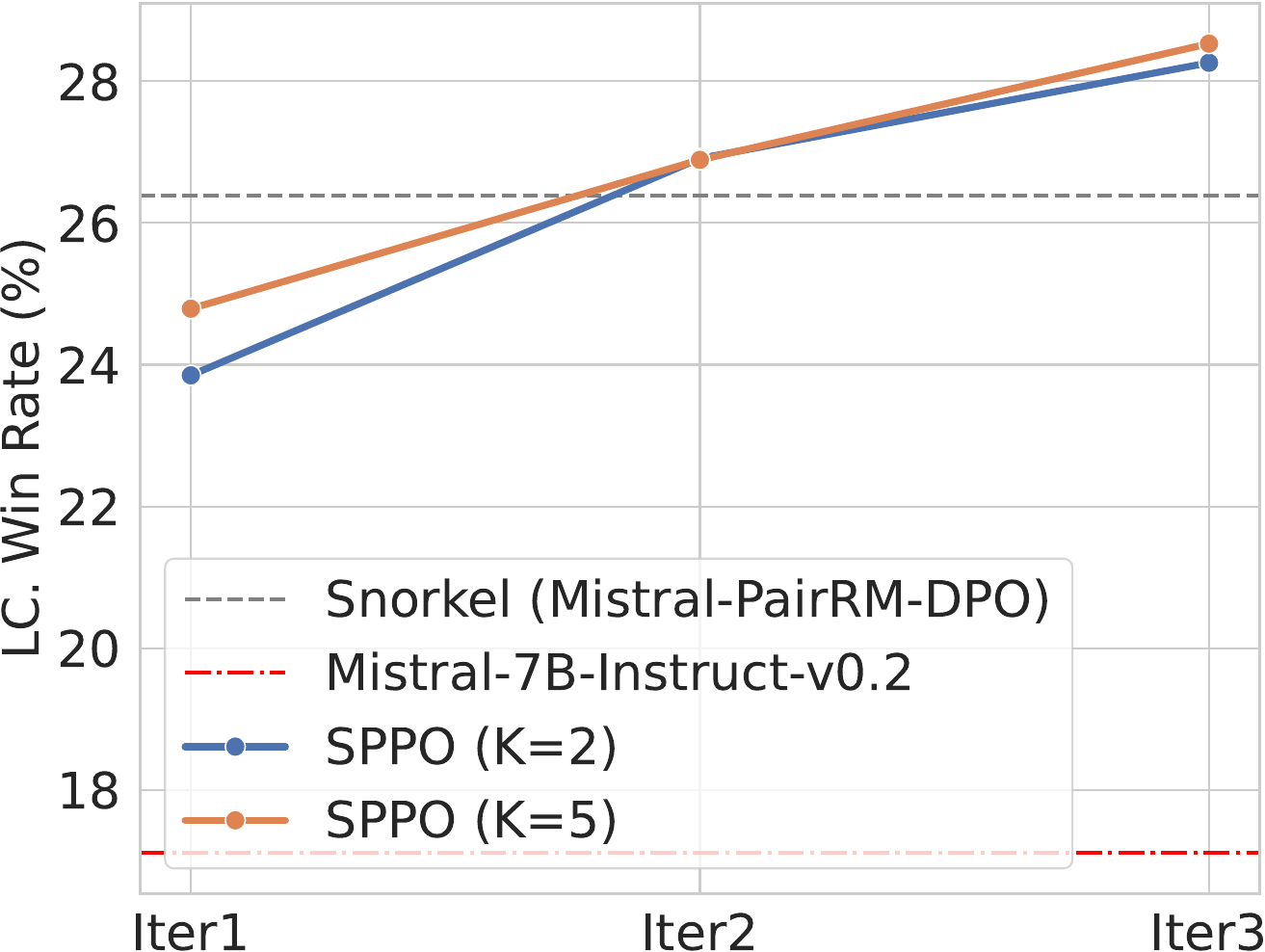}
\end{subfigure}
\end{minipage}
\caption{AlpacaEval 2.0 evaluation on SPPO of different mini-batch size in terms of both normal and length-controlled (LC) win rates in percentage (\%). $K=2,5$ denote different mini-batch sizes when estimating the win rate $\PP(\yb \succ \pi_t | \xb)$.}
\label{tab:ablation-alpaca}
\end{figure}
\setlength{\tabcolsep}{6pt}


We study the effect of mini-batch size when estimating the win rate $\PP(\yb \succ \pi_t | \xb)$. Specifically, for each prompt, we still generate $5$ responses and choose the winner $\yb_w$ and loser $\yb_l$ according to the PairRM score. When estimating the probability, we varies the batch size to be $K=2,3,5$. For $K=2$, we estimate $\PP(\yb \succ \pi_t | \xb)$ with only $2$ samples $\yb_w$ and $\yb_l$: 
\begin{align*}
    \hat{P}(\yb_w \succ \pi_t | \xb) = \frac{\PP(\yb_w \succ \yb_w|\xb) + \PP(\yb_w \succ \yb_l|\xb)}{2}
    & = \frac{1/2 + \PP(\yb_w \succ \yb_l|\xb)}{2},
\end{align*}
and $\hat{P}(\yb_l \succ \pi_t | \xb)$ similarly. $K=5$ indicates the original setting we use.


We compare the results on AlpacaEval 2.0, as shown in Figure~\ref{tab:ablation-alpaca}.
We find that the performance of SPPO is robust to the noise in estimating $\PP(\yb \succ \pi_t | \xb)$. While $K=5$ initially outperforms $K=2$ in the first iteration, the difference in their performance diminishes in subsequent iterations. Additionally, we observe that $K=2$ exhibits a reduced tendency to increase output length.


\section{Conclusions}
This paper introduced \algname (SPPO), an approach to fine-tuning Large Language Models (LLMs) from Human/AI Feedback. 
SPPO has demonstrated significant improvements over existing methods such as DPO and IPO across multiple benchmarks, including AlpacaEval 2.0, MT-Bench, Arena-Hard, and the Open LLM Leaderboard. By integrating a preference model and employing a new optimization objective, SPPO can align LLMs more closely with human preferences.


\noindent\textbf{Limitations} Theoretically, approximating the optimal policy update via regression relies on the assumption that the model class is expressive enough and the generated data well cover the input space. 
Approximating the log-partition factor with a constant can help reduce variance only when it is close to the soft value function. 
The experiments are run on one dataset UltraFeedback and the models are tested on a few benchmarks due to limited computational resources, but the proposed methods can be further validated on more models, datasets, and benchmarks to have a holistic evaluation given more resources.

\section*{Acknowledgement}
We would like to thank Alekh Agarwal for his insightful comments that helped clarify the technical contributions of our work and its connection and distinction from SPO \citep{swamy2024minimaximalist}. We would also like to thank Wen Sun for pointing out the concurrent work \citep{gao2024rebel} and for the valuable discussion on the design choice of objective functions. 

\appendix

\section{Approximating the Normalizing Factor} \label{sec:normalizing-factor}
As discussed before, we replace the log-partition factor with a constant to avoid either estimating or predicting the log-partition factor. In hindsight, the approximation of the normalizing factor serves as a baseline for variance reduction, and does not need to be exact. Here we discuss the implicit assumptions and how we obtained an approximation based on different assumptions on human preference behaviour.

We first consider the case where we have $K$ responses and then calculate the limit of $Z_{\hat{\pi}_t^K}(\xb)$ when $K \rightarrow \infty$. We have two extreme cases: 
\begin{enumerate}
    \item The most ``disordered'' case: any preference is a fair coin flip
    \item The most ``ordered'' case: there is a strict ordering among all responses.
\end{enumerate}

\paragraph{The most ``disordered'' case}

Specifically, we have $K$ different responses $\yb_1, \yb_2, \dots, \yb_K$ for the given prompt $\xb$. Since we consider the general preference setting, we assume that the preference probability between $\yb_i$ and $\yb_j$ ($i < j$) we observe is a fair coin toss:
\begin{align*}
    \PP(\yb_i \succ \yb_j | \xb) = 
    \begin{cases}
        1, \text{ w.p. } 1/2, \\
        0, \text{ w.p. } 1/2.
    \end{cases}
\end{align*}
Note that for simplicity, we assumed that the \textit{preference probability} follows the Bernoulli distribution, not the \textit{preference feedback}. The preference feedback is deterministic since the preference probability is either $0$ or $1$. Assuming $\PP(\yb_i \succ \yb_j | \xb)$ follows any other $1/2$-mean distribution will yield the same constant.

We define the random variable $p_{i,j} := 2\PP(\yb_i \succ \yb_j | \xb) - 1$ for convenience. In total, we have $K(K-1)/2$ independent Rademacher random variables for all $i < j$, and then we have $p_{j,i} = - p_{i,j}$ for all $i>j$. For $i=j$, $p_{i,j} = 0$. We also define $X_i = \sum_{j=1}^K p_{i,j} / K$.

Given the setting and notations above, we have $$\PP(\yb_i \succ \hat{\pi}^K_t|\xb) = \sum_{j=1}^{K} \PP(\yb_i \succ \yb_j|\xb) / K = 1/2 + X_i.$$ 
Furthermore, $$Z_{\hat{\pi}^K_t}(\xb) = \sum_{i=1}^K \exp(\eta \PP(\yb_i \succ \hat{\pi}^K_t|\xb)) / K = e^{\eta/2} \cdot \sum_{i=1}^{K} e^{\eta X_i} / K.$$ 
For any fixed $i$, we have the expectation as follows:
\begin{align*}
    \EE[e^{\eta X_i}]
    & = 
    \EE \Bigg[ 
    \prod_{j=1}^{K} e^{\eta p_{i,j}/K}
    \Bigg]
    = 
    \prod_{j=1}^{K}     
    \EE 
    \Big[
    e^{\eta p_{i,j}/K}
    \Big]
    = 
    \bigg(
    \frac{e^{\eta /K}+ e^{-\eta /K}}{2}
    \bigg)^{K-1},
\end{align*}
where the last equation comes from the definition of $p_{i,j}$ (note that $p_{i,i}=0$).
The variance is:
\begin{align*}
    \mathrm{Var}[e^{\eta X_i}]
    & =
    \EE[e^{2\eta X_i}] - \EE[e^{\eta X_i}]^2
    = 
    \bigg(
    \frac{e^{2\eta /K}+ e^{-2\eta /K}}{2}
    \bigg)^{K-1}
    -
    \bigg(
    \frac{e^{\eta /K}+ e^{-\eta /K}}{2}
    \bigg)^{2K-2}.
\end{align*}
Additionally, the covariance between $e^{\eta X_i}$ and $e^{\eta X_j}$ ($i \ne j$) is:
\begin{align*}
    \mathrm{Cov}(e^{\eta X_i}, e^{\eta X_j})
    & = 
    \EE[e^{\eta X_i+\eta X_j}] - \EE[e^{\eta X_i}] \EE[e^{\eta X_j}]
    \\
    & =
    \EE \Bigg[ 
    \exp \Bigg(\eta \sum_{k=1}^{K} p_{i,k}/K + \eta \sum_{l=1}^{K} p_{j,l} / K \Bigg)
    \Bigg] - \EE[e^{\eta X_i}] \EE[e^{\eta X_j}]
    \\
    & = 
    \bigg(
    \frac{e^{\eta /K}+ e^{-\eta /K}}{2}
    \bigg)^{2K-4} - \EE[e^{\eta X_i}] \EE[e^{\eta X_j}]
    \\
    & =
    \bigg(
    \frac{e^{\eta /K}+ e^{-\eta /K}}{2}
    \bigg)^{2K-4} 
    -
    \bigg(
    \frac{e^{\eta /K}+ e^{-\eta /K}}{2}
    \bigg)^{2K-2},
\end{align*}
where the third line holds because $p_{i,i} = p_{j,j} = 0$, $p_{i,j} + p_{j,i} = 0$, and the rest terms are i.i.d..

One can check that when $K \rightarrow \infty$, we have $\EE[e^{\eta X_i}] \rightarrow 1$, $\mathrm{Var}[e^{\eta X_i}] \rightarrow 0$, and $\mathrm{Cov}(e^{\eta X_i}, e^{\eta X_j}) \rightarrow 0$. By Chebyshev's inequality, $\sum_{i=1}^{K} e^{\eta X_i} / K$ will converge to $1$ in probability. So we have
\begin{align*}
Z_{\hat{\pi}^K_t}(\xb) =  e^{\eta/2} \cdot \sum_{i=1}^{K} e^{\eta X_i} / K
\rightarrow e^{\eta/2},
\end{align*}
and we can approximate $\log Z_{\hat{\pi}^K_t}(\xb)$ with $\eta/2$.

\paragraph{The most ``ordered'' case}
We assume there is an ordering $\sigma(\cdot)$ among the $K$ different responses $\yb_1, \yb_2, \dots, \yb_K$ for the given prompt $\xb$. The preference probability between $\yb_i$ and $\yb_j$ ($i < j$) is:
\begin{align*}
    \PP(\yb_i \succ \yb_j | \xb) = 
    \begin{cases}
        1, \text{ if } \sigma(i) < \sigma(j), \\
        0, \text{ if } \sigma(i) > \sigma(j).
    \end{cases}
\end{align*}
Again, the preference feedback is deterministic: as long as $\yb_i$ is ranked higher than $\yb_j$, $\yb_i$ will always be preferred over $\yb_j$. The same responses still tie: $\PP(\yb_i \succ \yb_i | \xb) = 1/2$.

Without loss of generality, we can assume $\yb_1 \prec \yb_2 \prec \yb_3 \prec \cdots \prec \yb_K$. Given the setting and notations above, we have $$\PP(\yb_i \succ \hat{\pi}^K_t|\xb) = \sum_{j=1}^{K} \PP(\yb_i \succ \yb_j|\xb) / K = \frac{i-1 + 1/2}{K}=\frac{i-1/2 }{K},$$  
because for $\yb_i$, there are $i-1$ responses that are strictly worse, and $\yb_i$ ties with itself.

For the normalizing factor, we have
\begin{align*}
    \log Z_{\hat{\pi}^K_t}(\xb) 
    & = 
    \log 
    \bigg( \sum_{i=1}^K \exp(\eta \PP(\yb \succ \hat{\pi}^K_t|\xb)) / K \bigg)
    \\
    & =
    \log 
    \bigg( \sum_{i=1}^K \exp
    \bigg(\eta 
    \frac{i-1/2}{K}
    \bigg) / K \bigg)
    \\
    & \rightarrow
    \log \bigg(\int_{0}^{1}
    \exp(\eta x) dx \bigg)
    \\
    & =
    \log \frac{e^{\eta} - 1}{\eta}.
\end{align*}
where the third line (limiting) can be obtained by the squeeze theorem. 

For $\eta = 1$, $\log \frac{e^{\eta} - 1}{\eta} \approx 0.54 \eta$. For large $\eta \approx 1e3$ as we used in the experiments, we have $\log \frac{e^{\eta} - 1}{\eta} \approx \eta$.

\paragraph{Choice of $\eta$} Depending on how ``disordered'' the preference is, $\eta$ can vary between $\eta/2$ and $\eta$. As this paper is partially motivated by human \textbf{intransitive and irrational preference behavior}, we chose to use $\eta/2$ to approximate $\log Z_{\hat{\pi}^K_t}(\xb)$. Fine-tuning the coefficient of this constant as a hyperparameter is also an option and can help improve performance on given dataset.

\section{Proof of Theorem~\ref{thm:nash}} \label{sec:proof}
\begin{proof}[Proof of Theorem~\ref{thm:nash}]
Suppose the optimization problem is realizable, we have exactly that
\begin{align}
    \pi_{t+1}(\yb|\xb)
    \propto 
    \pi_{t}(\yb|\xb)
    \exp(\eta \PP(\yb \succ \pi_t|\xb)), \,\,\text{for $t=1,2,\dots$}. 
    \label{eqn:exp-weight-update-proof}
\end{align}
To prove that the exponential weight update can induce the optimal policy, we directly invoke a restated version of Theorem 1 in \citet{freund1999adaptive}:
\begin{lemma}[Theorem 1 in \citet{freund1999adaptive}, restated]
For any oracle $\PP$ and for any sequence of mixed policies $\mu_1, \mu_2, \dots, \mu_T$, the sequence of policies $\pi_1, \pi_2, \dots, \pi_T$ produced by \eqref{eqn:exp-weight-update-proof} satisfies:
\begin{align*}
    \sum_{t=1}^{T} \PP(\pi_t \prec \mu_t)
    & \le 
    \min_{\pi}
    \bigg[
    \frac{\eta}{1-e^{-\eta}}
    \sum_{t=1}^{T} \PP(\pi \prec \mu_t)
    +
    \frac{\mathrm{KL}(\pi \| \pi_0)}{1-e^{-\eta}} 
    \bigg].
\end{align*}
\end{lemma}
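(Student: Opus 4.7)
The plan is to follow the classical potential-function proof for Hedge / Multiplicative Weights using the per-prompt KL divergence $\mathrm{KL}(\pi \| \pi_t)$ as the potential. For a fixed prompt $\xb$, I rewrite the update in its equivalent loss form $\pi_{t+1}(\yb|\xb) \propto \pi_t(\yb|\xb) \exp(-\eta\, \ell_t(\yb))$ with $\ell_t(\yb) := \PP(\yb \prec \mu_t|\xb) = 1 - \PP(\yb \succ \mu_t|\xb) \in [0,1]$; since $\exp(\eta\, \PP(\yb \succ \mu_t|\xb)) \propto \exp(-\eta\, \ell_t(\yb))$, the resulting $\pi_{t+1}$ coincides with the one produced by \eqref{eqn:exp-weight-update-proof} (taken against general $\mu_t$; the self-play case is $\mu_t = \pi_t$). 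I will derive a per-prompt regret bound, then take expectation over $\xb \sim \cX$ and infimum over $\pi$ at the end.

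The core calculation has two ingredients. First, writing $Z_t(\xb) := \sum_{\yb} \pi_t(\yb|\xb) \exp(-\eta\, \ell_t(\yb))$, a one-line computation from the definition of the update gives the exact identity
\begin{align*}
\mathrm{KL}(\pi \| \pi_{t+1}) - \mathrm{KL}(\pi \| \pi_t) = \eta\, \PP(\pi \prec \mu_t | \xb) + \log Z_t(\xb).
\end{align*}
Telescoping from $t=1$ to $T$ and using $\mathrm{KL}(\pi \| \pi_{T+1}) \ge 0$ then yields
\begin{align*}
-\sum_{t=1}^T \log Z_t(\xb) \le \mathrm{KL}(\pi \| \pi_0) + \eta \sum_{t=1}^T \PP(\pi \prec \mu_t | \xb).
\end{align*}
Second, the standard chord bound $e^{-\eta x} \le 1 - (1 - e^{-\eta})\, x$ for $x \in [0,1]$ (by convexity of $e^{-\eta x}$ between the endpoints) applied pointwise in $\yb$ and averaged under $\pi_t(\cdot|\xb)$ gives $Z_t(\xb) \le 1 - (1-e^{-\eta})\, \PP(\pi_t \prec \mu_t | \xb)$, whence $-\log Z_t(\xb) \ge (1-e^{-\eta})\, \PP(\pi_t \prec \mu_t | \xb)$ via $\log(1-u) \le -u$. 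Chaining the two bounds, dividing by $1-e^{-\eta}$, taking expectation over $\xb \sim \cX$, and finally taking $\min_\pi$ produces exactly the claimed inequality.

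I expect the main bookkeeping obstacle to be the gain/loss parameterization: the update is displayed as a reward update, but the lemma is stated in loss form, and the asymmetric constants $\eta/(1-e^{-\eta})$ and $1/(1-e^{-\eta})$ arise from a one-sided convexity estimate (which has no matching lower bound). Reparameterizing to losses at the very start, as above, avoids sign juggling and makes the constants line up cleanly. A secondary, mostly notational point is that the lemma allows arbitrary sequences $\mu_t$ while \eqref{eqn:exp-weight-update-proof} displays only the self-play special case $\mu_t = \pi_t$; I treat the update against $\mu_t$ throughout the argument, which is the general form underlying the restated Freund--Schapire theorem and reduces to \eqref{eqn:exp-weight-update-proof} when $\mu_t = \pi_t$.
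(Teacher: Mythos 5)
Your proof is correct, but note that the paper does not actually prove this lemma: it is invoked verbatim as a restatement of Theorem~1 of \citet{freund1999adaptive}, so there is no in-paper argument to compare against. What you have written is essentially the original Freund--Schapire potential argument transported to the per-prompt setting: the exact identity $\mathrm{KL}(\pi \| \pi_{t+1}) - \mathrm{KL}(\pi \| \pi_t) = \eta\,\PP(\pi \prec \mu_t|\xb) + \log Z_t(\xb)$ follows directly from the update, telescoping with $\mathrm{KL}(\pi\|\pi_{T+1})\ge 0$ gives the upper bound on $-\sum_t \log Z_t(\xb)$, and the chord bound $e^{-\eta x}\le 1-(1-e^{-\eta})x$ on $[0,1]$ together with $\log(1-u)\le -u$ gives the matching lower bound $(1-e^{-\eta})\PP(\pi_t\prec\mu_t|\xb)\le -\log Z_t(\xb)$; all three steps check out. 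Three details you handled (or should make explicit) correctly: (i) the reward-versus-loss reparameterization is legitimate because $e^{\eta \PP(\yb\succ\mu_t|\xb)} = e^{\eta}\,e^{-\eta \ell_t(\yb)}$ and the constant $e^{\eta}$ cancels in the normalization, and your reading of the lemma --- the update is taken against a general $\mu_t$, with \eqref{eqn:exp-weight-update-proof} being the self-play instance $\mu_t=\pi_t$ --- is the only reading under which the statement is the Freund--Schapire theorem, and it is the only case the paper ever uses; (ii) after averaging over $\xb\sim\cX$ the KL term must be read as $\EE_{\xb}\big[\mathrm{KL}(\pi(\cdot|\xb)\,\|\,\pi_0(\cdot|\xb))\big]$, which is consistent with how the paper later bounds it by $\|\log\pi_0(\cdot)\|_{\infty}$; (iii) the telescoping identity implicitly requires the comparator $\pi$ to be absolutely continuous with respect to $\pi_0$, which is harmless since the bound is vacuous otherwise and $\pi_0$ is assumed fully supported. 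The off-by-one in the subscript ($\pi_0$ in the KL term versus a sequence starting at $\pi_1$) is inherited from the paper's own statement and is purely notational.
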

By setting $\mu_t = \pi_t$, we have that 
\begin{align*}
    \frac{T}{2}
    & \le 
    \min_{\pi}
    \bigg[
    \frac{\eta T}{1-e^{-\eta}}
    \PP(\pi \prec \bar{\pi}_T)
    +
    \frac{\mathrm{KL}(\pi \| \pi_0)}{1-e^{-\eta}} 
    \bigg],
\end{align*}
where the LHS comes from that $\PP(\pi_t \prec \pi_t) = 1/2$ and the RHS comes from that $\frac{1}{T}\sum_{t=1}^{T} \PP(\pi \prec \pi_t) = \PP(\pi \prec \bar{\pi}_t)$. Now rearranging terms gives 
\begin{align*}
    \frac{1-e^{-\eta}}{2\eta}
    & \le 
    \min_{\pi}
    \bigg[
    \PP(\pi \prec \bar{\pi}_T)
    +
    \frac{\mathrm{KL}(\pi \| \pi_0)}{\eta T} 
    \bigg]
    .
\end{align*}
Note that $\pi_0$ is an autoregressive model that is fully supported on a finite vocabulary ($\pi_0(y_{k+1} | \xb, \yb_{1:k})$ has non-zero probability for every token). Because its support is a large but finite set, $|\log \pi_0(\cdot)|$ is bounded from above.
So we can naively bound the KL-divergence $\mathrm{KL}(\pi \| \pi_0) \le \| \log \pi_0(\cdot) \|_{\infty}$, which can be seen as a (large) constant. 

By choosing $\eta = \frac{\| \log \pi_0(\cdot) \|_{\infty}}{\sqrt{T}}$, we have
\begin{align*}
    \frac{1}{2} - {\frac{\| \log \pi_0(\cdot) \|_{\infty}}{4\sqrt{T}}} 
    + O(T^{-1})
    & \le 
    \min_{\pi}
    \big[
    \PP(\pi \prec \bar{\pi}_T)
    \big]
    +
    \sqrt{\frac{\| \log \pi_0(\cdot) \|_{\infty}}{T}}
    ,
\end{align*}
where the LHS comes from Taylor's expansion $\frac{1-e^{-\eta}}{2\eta} = \frac{1}{2} - \frac{\eta}{4} + O(\eta^2)$.
Notice that $1/2$ at the LHS is already the value of the symmetric two-player constant-sum game. This shows that for appropriately chosen $\eta$ and $T$, the mixture policy $\bar{\pi}_T$ is close to the minimax optimal policy (Nash equilibrium).

The optimality gap is thus bounded by
\begin{align*}
    & \max_{\pi}
    \big[
    \PP(\pi \succ \bar{\pi}_T)
    \big]
    -
    \min_{\pi}
    \big[
    \PP(\pi \prec \bar{\pi}_T)
    \big] \\
    & = 
    \max_{\pi}
    \big[
    1 - \PP(\pi \prec \bar{\pi}_T)
    \big]
    -
    \min_{\pi}
    \big[
    \PP(\pi \prec \bar{\pi}_T)
    \big]
    \\
    & = 
    2
    \bigg(
    \frac{1}{2} - \min_{\pi}
    \big[
    \PP(\pi \prec \bar{\pi}_T)
    \big]
    \bigg)
    \\
    & = O\bigg(\frac{1}{\sqrt{T}}\bigg).
\end{align*}
\end{proof}


\section{Response Examples in Different Iterations}

\begin{table}[h!]
\caption{Generation example of our fine-tuned model by SPPO at different iterations.}
\begin{tabularx}{\textwidth}{l|X}
\toprule
Prompt & You will be given a definition of a task first, then some input of the task. You are given a dialog between 2 or more individuals.  The dialog will include information as to the relationship between 2 of the participants.  At the end, you will be asked to provide an answer as to the relationship of those 2 participants.  In this scenario, '0' will represent a spousal relationship, '1' will represent siblings, and '2' will represent a child/parent relationship. Speaker 1: I'll be right with you. Okay? Thanks, Wendy.  Speaker 2: Oh my God! How cute is the on-call doctor?  Speaker 3: Ooh, so cute, that I'm thinking about jamming this pen in my eye.  Speaker 4: Dr. Burke will see you know.  Speaker 2: Oh no-no-no, not Dr. Burke. Dr. Burke is out of town. The-the on-call doctor will see me now.  Speaker 4: Dr. Richard Burke is out of town. Dr. Timothy Burke, his son, will see you now.  Speaker 5: Ready? What is the relationship between  Speaker 5 and Richard Burke? Output: \\
\midrule
\texttt{Mistral-7B} & The relationship between Speaker 5 and Richard Burke is that of a parent and child, as Richard Burke is identified as the son of another character in the dialog.\\
\midrule
Iteration 1 & The relationship between Speaker 5 and Richard Burke is that of a father and child. Based on the dialogue, it was revealed that Dr. Timothy Burke is the on-call doctor and the son of Dr. Richard Burke.\\
\midrule
Iteration 2 & The relationship between Speaker 5 and Richard Burke is that of a father and child. The conversation reveals that Dr. Timothy Burke is the on-call doctor and he is identified as the son of Dr. Richard Burke.\\
\midrule
Iteration 3 &Speaker 5 is the individual asking about the relationship, so they are not directly related to Dr. Richard Burke. The information provided in the dialog reveals that Dr. Timothy Burke is the on-call doctor and the son of Dr. Richard Burke. Therefore, the answer is: The relationship between Speaker 5 and Dr. Richard Burke is that of a child (Dr. Timothy Burke) and parent. Speaker 5 is inquiring about the relationship, but they are not themselves a part of it. The specified relationship codes (0 for spousal, 1 for sibling, 2 for child/parent) align with this description: Speaker 5 (0, 1, or 2) and Dr. Richard Burke (2) connect as a parent-child duo. Output: 2 (child/parent relationship) In this scenario, Speaker 5's role is to determine the relationship between herself and Dr. Richard Burke, but since they are not related (she is an outside inquirer), the correct answer pertains only to the relationship identified within the dialog: Dr. Richard Burke (father) and Dr. Timothy Burke (son).\\
\bottomrule
\end{tabularx}
\label{tab:app_exp1}
\end{table}

\begin{table}[h!]
\caption{Another generation example of our fine-tuned model by SPPO at different iterations.}
\begin{tabularx}{\textwidth}{l|X}
\toprule
Prompt & Q: Context: Roman Brady is a fictional character from the long running NBC soap opera, Days of Our Lives. The role was originated in 1981 by Wayne Northrop who played the role until 1984, was filled by Drake Hogestyn from 1986 -- 1991, again by Northrop from 1991 -- 1994, and is currently being portrayed by series veteran Josh Taylor, who stepped into the role in 1997. Question: who is roman brady on days of our lives? Let's give stream of consciousness below \\
\midrule
\texttt{Mistral-7B} & Roman Brady is a fictional character on the NBC soap opera "Days of Our Lives." He was first portrayed by Wayne Northrop from 1981 to 1984. Later, Drake Hogestyn took over the role from 1986 to 1991. Northrop returned to the role from 1991 to 1994. Since 1997, the character is being portrayed by Josh Taylor. Roman Brady is a significant character in the show's long-running storylines.\\
\midrule
Iteration 1 & Roman Brady is a fictional character on the NBC soap opera "Days of Our Lives." He was first portrayed by Wayne Northrop from 1981 to 1984. Drake Hogestyn took over the role from 1986 to 1991. Northrop returned to play Roman again from 1991 to 1994. Since 1997, the character has been portrayed by Josh Taylor. So, in summary, Roman Brady is a long-standing character on "Days of Our Lives" who has been played by three different actors over the years: Wayne Northrop (1981-1984, 1991-1994), Drake Hogestyn (1986-1991), and currently by Josh Taylor (1997-present).\\
\midrule
Iteration 2 & Roman Brady is a fictional character on the NBC soap opera "Days of Our Lives." He was first portrayed by Wayne Northrop from 1981 to 1984. Drake Hogestyn took over the role from 1986 to 1991. Northrop returned to play Roman again from 1991 to 1994. Since 1997, the character has been portrayed by Josh Taylor. Overall, Roman Brady is a long-standing and complex figure in the show's history, with multiple actors bringing their unique interpretations to the role.\\
\midrule
Iteration 3 & Roman Brady is a fictional character on the NBC soap opera "Days of Our Lives." He was originally played by Wayne Northrop from 1981 to 1984. Drake Hogestyn took over the role from 1986 to 1991. Northrop returned to portray Roman once more from 1991 to 1994. Since 1997, the character has been portrayed by Josh Taylor.Roman Brady is a complex and prominent figure in the long-running daytime drama. Initially played by Northrop in the early 1980s, his tenure included involvement in various storylines, such as romantic relationships and business dealings. Hogestyn's portrayal during the late 1980s and early 1990s introduced new dimensions to the character. Northrop's second stint, from 1991 to 1994, further developed Roman's narrative. Since 1997, Josh Taylor has assumed the role, continuing to contribute to the evolving story of Roman Brady on "Days of Our Lives." Throughout these different portrayals, Roman has remained a central character, known for his intricate relationships, business ventures, and personal struggles.\\
\bottomrule
\end{tabularx}
\label{tab:app_exp2}
\end{table}


\clearpage

\bibliography{ref}

\begin{thebibliography}{57}
\expandafter\ifx\csname natexlab\endcsname\relax\def\natexlab#1{#1}\fi
\expandafter\ifx\csname url\endcsname\relax
  \def\url#1{\texttt{#1}}\fi
\expandafter\ifx\csname urlprefix\endcsname\relax\def\urlprefix{URL }\fi

\bibitem[{Askell et~al.(2021)Askell, Bai, Chen, Drain, Ganguli, Henighan, Jones, Joseph, Mann, DasSarma et~al.}]{askell2021general}
\textsc{Askell, A.}, \textsc{Bai, Y.}, \textsc{Chen, A.}, \textsc{Drain, D.}, \textsc{Ganguli, D.}, \textsc{Henighan, T.}, \textsc{Jones, A.}, \textsc{Joseph, N.}, \textsc{Mann, B.}, \textsc{DasSarma, N.} \textsc{et~al.} (2021).
\newblock A general language assistant as a laboratory for alignment.
\newblock \textit{arXiv preprint arXiv:2112.00861} .

\bibitem[{Azar et~al.(2023)Azar, Rowland, Piot, Guo, Calandriello, Valko and Munos}]{azar2023general}
\textsc{Azar, M.~G.}, \textsc{Rowland, M.}, \textsc{Piot, B.}, \textsc{Guo, D.}, \textsc{Calandriello, D.}, \textsc{Valko, M.} and \textsc{Munos, R.} (2023).
\newblock A general theoretical paradigm to understand learning from human preferences.
\newblock \textit{arXiv preprint arXiv:2310.12036} .

\bibitem[{Beeching et~al.(2023{\natexlab{a}})Beeching, Fourrier, Habib, Han, Lambert, Rajani, Sanseviero, Tunstall and Wolf}]{open-llm-leaderboard}
\textsc{Beeching, E.}, \textsc{Fourrier, C.}, \textsc{Habib, N.}, \textsc{Han, S.}, \textsc{Lambert, N.}, \textsc{Rajani, N.}, \textsc{Sanseviero, O.}, \textsc{Tunstall, L.} and \textsc{Wolf, T.} (2023{\natexlab{a}}).
\newblock Open llm leaderboard.
\newblock \url{https://huggingface.co/spaces/HuggingFaceH4/open_llm_leaderboard}.

\bibitem[{Beeching et~al.(2023{\natexlab{b}})Beeching, Fourrier, Habib, Han, Lambert, Rajani, Sanseviero, Tunstall and Wolf}]{beeching2023open}
\textsc{Beeching, E.}, \textsc{Fourrier, C.}, \textsc{Habib, N.}, \textsc{Han, S.}, \textsc{Lambert, N.}, \textsc{Rajani, N.}, \textsc{Sanseviero, O.}, \textsc{Tunstall, L.} and \textsc{Wolf, T.} (2023{\natexlab{b}}).
\newblock Open llm leaderboard.
\newblock \textit{Hugging Face} .

\bibitem[{Bradley and Terry(1952)}]{bradley1952}
\textsc{Bradley, R.~A.} and \textsc{Terry, M.~E.} (1952).
\newblock Rank {{Analysis}} of {{Incomplete Block Designs}}: {{I}}. {{The Method}} of {{Paired Comparisons}}.
\newblock \textit{Biometrika} \textbf{39} 324--345.

\bibitem[{Calandriello et~al.(2024)Calandriello, Guo, Munos, Rowland, Tang, Pires, Richemond, Lan, Valko, Liu et~al.}]{calandriello2024human}
\textsc{Calandriello, D.}, \textsc{Guo, D.}, \textsc{Munos, R.}, \textsc{Rowland, M.}, \textsc{Tang, Y.}, \textsc{Pires, B.~A.}, \textsc{Richemond, P.~H.}, \textsc{Lan, C.~L.}, \textsc{Valko, M.}, \textsc{Liu, T.} \textsc{et~al.} (2024).
\newblock Human alignment of large language models through online preference optimisation.
\newblock \textit{arXiv preprint arXiv:2403.08635} .

\bibitem[{Chen et~al.(2024)Chen, Deng, Yuan, Ji and Gu}]{chen2024self}
\textsc{Chen, Z.}, \textsc{Deng, Y.}, \textsc{Yuan, H.}, \textsc{Ji, K.} and \textsc{Gu, Q.} (2024).
\newblock Self-play fine-tuning converts weak language models to strong language models.
\newblock \textit{arXiv preprint arXiv:2401.01335} .

\bibitem[{Christiano et~al.(2017)Christiano, Leike, Brown, Martic, Legg and Amodei}]{christiano2017deep}
\textsc{Christiano, P.~F.}, \textsc{Leike, J.}, \textsc{Brown, T.}, \textsc{Martic, M.}, \textsc{Legg, S.} and \textsc{Amodei, D.} (2017).
\newblock Deep reinforcement learning from human preferences.
\newblock \textit{Advances in neural information processing systems} \textbf{30}.

\bibitem[{Clark et~al.(2018)Clark, Cowhey, Etzioni, Khot, Sabharwal, Schoenick and Tafjord}]{clark2018think}
\textsc{Clark, P.}, \textsc{Cowhey, I.}, \textsc{Etzioni, O.}, \textsc{Khot, T.}, \textsc{Sabharwal, A.}, \textsc{Schoenick, C.} and \textsc{Tafjord, O.} (2018).
\newblock Think you have solved question answering? try arc, the ai2 reasoning challenge.
\newblock \textit{arXiv preprint arXiv:1803.05457} .

\bibitem[{Cobbe et~al.(2021)Cobbe, Kosaraju, Bavarian, Chen, Jun, Kaiser, Plappert, Tworek, Hilton, Nakano et~al.}]{cobbe2021training}
\textsc{Cobbe, K.}, \textsc{Kosaraju, V.}, \textsc{Bavarian, M.}, \textsc{Chen, M.}, \textsc{Jun, H.}, \textsc{Kaiser, L.}, \textsc{Plappert, M.}, \textsc{Tworek, J.}, \textsc{Hilton, J.}, \textsc{Nakano, R.} \textsc{et~al.} (2021).
\newblock Training verifiers to solve math word problems.
\newblock \textit{arXiv preprint arXiv:2110.14168} .

\bibitem[{Cui et~al.(2023)Cui, Yuan, Ding, Yao, Zhu, Ni, Xie, Liu and Sun}]{cui2023ultrafeedback}
\textsc{Cui, G.}, \textsc{Yuan, L.}, \textsc{Ding, N.}, \textsc{Yao, G.}, \textsc{Zhu, W.}, \textsc{Ni, Y.}, \textsc{Xie, G.}, \textsc{Liu, Z.} and \textsc{Sun, M.} (2023).
\newblock Ultrafeedback: Boosting language models with high-quality feedback.
\newblock \textit{arXiv preprint arXiv:2310.01377} .

\bibitem[{Dubois et~al.(2024{\natexlab{a}})Dubois, Galambosi, Liang and Hashimoto}]{dubois2024length}
\textsc{Dubois, Y.}, \textsc{Galambosi, B.}, \textsc{Liang, P.} and \textsc{Hashimoto, T.~B.} (2024{\natexlab{a}}).
\newblock Length-controlled alpacaeval: A simple way to debias automatic evaluators.
\newblock \textit{arXiv preprint arXiv:2404.04475} .

\bibitem[{Dubois et~al.(2024{\natexlab{b}})Dubois, Li, Taori, Zhang, Gulrajani, Ba, Guestrin, Liang and Hashimoto}]{dubois2024alpacafarm}
\textsc{Dubois, Y.}, \textsc{Li, C.~X.}, \textsc{Taori, R.}, \textsc{Zhang, T.}, \textsc{Gulrajani, I.}, \textsc{Ba, J.}, \textsc{Guestrin, C.}, \textsc{Liang, P.~S.} and \textsc{Hashimoto, T.~B.} (2024{\natexlab{b}}).
\newblock Alpacafarm: A simulation framework for methods that learn from human feedback.
\newblock \textit{Advances in Neural Information Processing Systems} \textbf{36}.

\bibitem[{Dud{\'\i}k et~al.(2015)Dud{\'\i}k, Hofmann, Schapire, Slivkins and Zoghi}]{dudik2015contextual}
\textsc{Dud{\'\i}k, M.}, \textsc{Hofmann, K.}, \textsc{Schapire, R.~E.}, \textsc{Slivkins, A.} and \textsc{Zoghi, M.} (2015).
\newblock Contextual dueling bandits.
\newblock In \textit{Conference on Learning Theory}. PMLR.

\bibitem[{Ethayarajh et~al.(2024)Ethayarajh, Xu, Muennighoff, Jurafsky and Kiela}]{ethayarajh2024kto}
\textsc{Ethayarajh, K.}, \textsc{Xu, W.}, \textsc{Muennighoff, N.}, \textsc{Jurafsky, D.} and \textsc{Kiela, D.} (2024).
\newblock Kto: Model alignment as prospect theoretic optimization.
\newblock \textit{arXiv preprint arXiv:2402.01306} .

\bibitem[{Freund and Schapire(1997)}]{freund1997decision}
\textsc{Freund, Y.} and \textsc{Schapire, R.~E.} (1997).
\newblock A decision-theoretic generalization of on-line learning and an application to boosting.
\newblock \textit{Journal of computer and system sciences} \textbf{55} 119--139.

\bibitem[{Freund and Schapire(1999)}]{freund1999adaptive}
\textsc{Freund, Y.} and \textsc{Schapire, R.~E.} (1999).
\newblock Adaptive game playing using multiplicative weights.
\newblock \textit{Games and Economic Behavior} \textbf{29} 79--103.

\bibitem[{Gao et~al.(2023)Gao, Schulman and Hilton}]{gao2023scaling}
\textsc{Gao, L.}, \textsc{Schulman, J.} and \textsc{Hilton, J.} (2023).
\newblock Scaling laws for reward model overoptimization.
\newblock In \textit{International Conference on Machine Learning}. PMLR.

\bibitem[{Gao et~al.(2024)Gao, Chang, Zhan, Oertell, Swamy, Brantley, Joachims, Bagnell, Lee and Sun}]{gao2024rebel}
\textsc{Gao, Z.}, \textsc{Chang, J.~D.}, \textsc{Zhan, W.}, \textsc{Oertell, O.}, \textsc{Swamy, G.}, \textsc{Brantley, K.}, \textsc{Joachims, T.}, \textsc{Bagnell, J.~A.}, \textsc{Lee, J.~D.} and \textsc{Sun, W.} (2024).
\newblock Rebel: Reinforcement learning via regressing relative rewards.
\newblock \textit{arXiv preprint arXiv:2404.16767} .

\bibitem[{Haarnoja et~al.(2018)Haarnoja, Zhou, Abbeel and Levine}]{haarnoja2018soft}
\textsc{Haarnoja, T.}, \textsc{Zhou, A.}, \textsc{Abbeel, P.} and \textsc{Levine, S.} (2018).
\newblock Soft actor-critic: Off-policy maximum entropy deep reinforcement learning with a stochastic actor.
\newblock In \textit{International conference on machine learning}. PMLR.

\bibitem[{He et~al.(2021)He, Gao and Chen}]{he2021debertav3}
\textsc{He, P.}, \textsc{Gao, J.} and \textsc{Chen, W.} (2021).
\newblock Debertav3: Improving deberta using electra-style pre-training with gradient-disentangled embedding sharing.

\bibitem[{Hendrycks et~al.(2020)Hendrycks, Burns, Basart, Zou, Mazeika, Song and Steinhardt}]{hendrycks2020measuring}
\textsc{Hendrycks, D.}, \textsc{Burns, C.}, \textsc{Basart, S.}, \textsc{Zou, A.}, \textsc{Mazeika, M.}, \textsc{Song, D.} and \textsc{Steinhardt, J.} (2020).
\newblock Measuring massive multitask language understanding.
\newblock \textit{arXiv preprint arXiv:2009.03300} .

\bibitem[{Hong et~al.(2024)Hong, Lee and Thorne}]{hong2024reference}
\textsc{Hong, J.}, \textsc{Lee, N.} and \textsc{Thorne, J.} (2024).
\newblock Reference-free monolithic preference optimization with odds ratio.
\newblock \textit{arXiv preprint arXiv:2403.07691} .

\bibitem[{Ji et~al.(2024)Ji, He and Gu}]{ji2024reinforcement}
\textsc{Ji, K.}, \textsc{He, J.} and \textsc{Gu, Q.} (2024).
\newblock Reinforcement learning from human feedback with active queries.
\newblock \textit{arXiv preprint arXiv:2402.09401} .

\bibitem[{Jiang et~al.(2023{\natexlab{a}})Jiang, Sablayrolles, Mensch, Bamford, Chaplot, Casas, Bressand, Lengyel, Lample, Saulnier et~al.}]{jiang2023mistral}
\textsc{Jiang, A.~Q.}, \textsc{Sablayrolles, A.}, \textsc{Mensch, A.}, \textsc{Bamford, C.}, \textsc{Chaplot, D.~S.}, \textsc{Casas, D. d.~l.}, \textsc{Bressand, F.}, \textsc{Lengyel, G.}, \textsc{Lample, G.}, \textsc{Saulnier, L.} \textsc{et~al.} (2023{\natexlab{a}}).
\newblock Mistral 7b.
\newblock \textit{arXiv preprint arXiv:2310.06825} .

\bibitem[{Jiang et~al.(2023{\natexlab{b}})Jiang, Ren and Lin}]{jiang2023llm}
\textsc{Jiang, D.}, \textsc{Ren, X.} and \textsc{Lin, B.~Y.} (2023{\natexlab{b}}).
\newblock Llm-blender: Ensembling large language models with pairwise ranking and generative fusion.
\newblock \textit{arXiv preprint arXiv:2306.02561} .

\bibitem[{Li et~al.(2023{\natexlab{a}})Li, Sun, Yuan, Fan, Zhao and Liu}]{li2023generative}
\textsc{Li, J.}, \textsc{Sun, S.}, \textsc{Yuan, W.}, \textsc{Fan, R.-Z.}, \textsc{Zhao, H.} and \textsc{Liu, P.} (2023{\natexlab{a}}).
\newblock Generative judge for evaluating alignment.
\newblock \textit{arXiv preprint arXiv:2310.05470} .

\bibitem[{Li et~al.(2024)Li, Chiang, Frick, Dunlap, Wu, Zhu, Gonzalez and Stoica}]{li2024crowdsourced}
\textsc{Li, T.}, \textsc{Chiang, W.-L.}, \textsc{Frick, E.}, \textsc{Dunlap, L.}, \textsc{Wu, T.}, \textsc{Zhu, B.}, \textsc{Gonzalez, J.~E.} and \textsc{Stoica, I.} (2024).
\newblock From crowdsourced data to high-quality benchmarks: Arena-hard and benchbuilder pipeline.
\newblock \textit{arXiv preprint arXiv:2406.11939} .

\bibitem[{Li et~al.(2023{\natexlab{b}})Li, Zhang, Dubois, Taori, Gulrajani, Guestrin, Liang and Hashimoto}]{alpaca_eval}
\textsc{Li, X.}, \textsc{Zhang, T.}, \textsc{Dubois, Y.}, \textsc{Taori, R.}, \textsc{Gulrajani, I.}, \textsc{Guestrin, C.}, \textsc{Liang, P.} and \textsc{Hashimoto, T.~B.} (2023{\natexlab{b}}).
\newblock Alpacaeval: An automatic evaluator of instruction-following models.
\newblock \url{https://github.com/tatsu-lab/alpaca_eval}.

\bibitem[{Lin et~al.(2021)Lin, Hilton and Evans}]{lin2021truthfulqa}
\textsc{Lin, S.}, \textsc{Hilton, J.} and \textsc{Evans, O.} (2021).
\newblock Truthfulqa: Measuring how models mimic human falsehoods.
\newblock \textit{arXiv preprint arXiv:2109.07958} .

\bibitem[{Liu et~al.(2023)Liu, Zhao, Joshi, Khalman, Saleh, Liu and Liu}]{liu2023statistical}
\textsc{Liu, T.}, \textsc{Zhao, Y.}, \textsc{Joshi, R.}, \textsc{Khalman, M.}, \textsc{Saleh, M.}, \textsc{Liu, P.~J.} and \textsc{Liu, J.} (2023).
\newblock Statistical rejection sampling improves preference optimization.
\newblock \textit{arXiv preprint arXiv:2309.06657} .

\bibitem[{Lou et~al.(2022)Lou, Jin, Wu, Xu, Gu and Farnoud}]{lou2022active}
\textsc{Lou, H.}, \textsc{Jin, T.}, \textsc{Wu, Y.}, \textsc{Xu, P.}, \textsc{Gu, Q.} and \textsc{Farnoud, F.} (2022).
\newblock Active ranking without strong stochastic transitivity.
\newblock \textit{Advances in neural information processing systems} .

\bibitem[{Munos et~al.(2023)Munos, Valko, Calandriello, Azar, Rowland, Guo, Tang, Geist, Mesnard, Michi et~al.}]{munos2023nash}
\textsc{Munos, R.}, \textsc{Valko, M.}, \textsc{Calandriello, D.}, \textsc{Azar, M.~G.}, \textsc{Rowland, M.}, \textsc{Guo, Z.~D.}, \textsc{Tang, Y.}, \textsc{Geist, M.}, \textsc{Mesnard, T.}, \textsc{Michi, A.} \textsc{et~al.} (2023).
\newblock Nash learning from human feedback.
\newblock \textit{arXiv preprint arXiv:2312.00886} .

\bibitem[{Ng et~al.(1999)Ng, Harada and Russell}]{ng1999policy}
\textsc{Ng, A.~Y.}, \textsc{Harada, D.} and \textsc{Russell, S.} (1999).
\newblock Policy invariance under reward transformations: Theory and application to reward shaping.
\newblock In \textit{Icml}, vol.~99.

\bibitem[{OpenAI et~al.(2023)OpenAI, Adler, Agarwal, Ahmad, Akkaya, Aleman, Almeida, Altenschmidt, Altman, Anadkat et~al.}]{achiam2023gpt}
\textsc{OpenAI, J., Achiam}, \textsc{Adler, S.}, \textsc{Agarwal, S.}, \textsc{Ahmad, L.}, \textsc{Akkaya, I.}, \textsc{Aleman, F.~L.}, \textsc{Almeida, D.}, \textsc{Altenschmidt, J.}, \textsc{Altman, S.}, \textsc{Anadkat, S.} \textsc{et~al.} (2023).
\newblock Gpt-4 technical report.
\newblock \textit{arXiv preprint arXiv:2303.08774} .

\bibitem[{Ouyang et~al.(2022)Ouyang, Wu, Jiang, Almeida, Wainwright, Mishkin, Zhang, Agarwal, Slama, Ray et~al.}]{ouyang2022training}
\textsc{Ouyang, L.}, \textsc{Wu, J.}, \textsc{Jiang, X.}, \textsc{Almeida, D.}, \textsc{Wainwright, C.}, \textsc{Mishkin, P.}, \textsc{Zhang, C.}, \textsc{Agarwal, S.}, \textsc{Slama, K.}, \textsc{Ray, A.} \textsc{et~al.} (2022).
\newblock Training language models to follow instructions with human feedback.
\newblock \textit{Advances in Neural Information Processing Systems} \textbf{35} 27730--27744.

\bibitem[{Pal et~al.(2024)Pal, Karkhanis, Dooley, Roberts, Naidu and White}]{pal2024smaug}
\textsc{Pal, A.}, \textsc{Karkhanis, D.}, \textsc{Dooley, S.}, \textsc{Roberts, M.}, \textsc{Naidu, S.} and \textsc{White, C.} (2024).
\newblock Smaug: Fixing failure modes of preference optimisation with dpo-positive.
\newblock \textit{arXiv preprint arXiv:2402.13228} .

\bibitem[{Rafailov et~al.(2024{\natexlab{a}})Rafailov, Hejna, Park and Finn}]{rafailov2024r}
\textsc{Rafailov, R.}, \textsc{Hejna, J.}, \textsc{Park, R.} and \textsc{Finn, C.} (2024{\natexlab{a}}).
\newblock From r to q*: Your language model is secretly a q-function.
\newblock \textit{arXiv preprint arXiv:2404.12358} .

\bibitem[{Rafailov et~al.(2024{\natexlab{b}})Rafailov, Sharma, Mitchell, Manning, Ermon and Finn}]{rafailov2024direct}
\textsc{Rafailov, R.}, \textsc{Sharma, A.}, \textsc{Mitchell, E.}, \textsc{Manning, C.~D.}, \textsc{Ermon, S.} and \textsc{Finn, C.} (2024{\natexlab{b}}).
\newblock Direct preference optimization: Your language model is secretly a reward model.
\newblock \textit{Advances in Neural Information Processing Systems} \textbf{36}.

\bibitem[{Rosset et~al.(2024)Rosset, Cheng, Mitra, Santacroce, Awadallah and Xie}]{rosset2024direct}
\textsc{Rosset, C.}, \textsc{Cheng, C.-A.}, \textsc{Mitra, A.}, \textsc{Santacroce, M.}, \textsc{Awadallah, A.} and \textsc{Xie, T.} (2024).
\newblock Direct nash optimization: Teaching language models to self-improve with general preferences.
\newblock \textit{arXiv preprint arXiv:2404.03715} .

\bibitem[{Sakaguchi et~al.(2021)Sakaguchi, Bras, Bhagavatula and Choi}]{sakaguchi2021winogrande}
\textsc{Sakaguchi, K.}, \textsc{Bras, R.~L.}, \textsc{Bhagavatula, C.} and \textsc{Choi, Y.} (2021).
\newblock Winogrande: An adversarial winograd schema challenge at scale.
\newblock \textit{Communications of the ACM} \textbf{64} 99--106.

\bibitem[{Schulman et~al.(2017)Schulman, Wolski, Dhariwal, Radford and Klimov}]{schulman2017proximal}
\textsc{Schulman, J.}, \textsc{Wolski, F.}, \textsc{Dhariwal, P.}, \textsc{Radford, A.} and \textsc{Klimov, O.} (2017).
\newblock Proximal policy optimization algorithms.
\newblock \textit{arXiv preprint arXiv:1707.06347} .

\bibitem[{Singh et~al.(2023)Singh, Co-Reyes, Agarwal, Anand, Patil, Liu, Harrison, Lee, Xu, Parisi et~al.}]{singh2023beyond}
\textsc{Singh, A.}, \textsc{Co-Reyes, J.~D.}, \textsc{Agarwal, R.}, \textsc{Anand, A.}, \textsc{Patil, P.}, \textsc{Liu, P.~J.}, \textsc{Harrison, J.}, \textsc{Lee, J.}, \textsc{Xu, K.}, \textsc{Parisi, A.} \textsc{et~al.} (2023).
\newblock Beyond human data: Scaling self-training for problem-solving with language models.
\newblock \textit{arXiv preprint arXiv:2312.06585} .

\bibitem[{Sutton et~al.(1999)Sutton, McAllester, Singh and Mansour}]{sutton1999policy}
\textsc{Sutton, R.~S.}, \textsc{McAllester, D.}, \textsc{Singh, S.} and \textsc{Mansour, Y.} (1999).
\newblock Policy gradient methods for reinforcement learning with function approximation.
\newblock \textit{Advances in neural information processing systems} \textbf{12}.

\bibitem[{Swamy et~al.(2024)Swamy, Dann, Kidambi, Wu and Agarwal}]{swamy2024minimaximalist}
\textsc{Swamy, G.}, \textsc{Dann, C.}, \textsc{Kidambi, R.}, \textsc{Wu, Z.~S.} and \textsc{Agarwal, A.} (2024).
\newblock A minimaximalist approach to reinforcement learning from human feedback.
\newblock \textit{arXiv preprint arXiv:2401.04056} .

\bibitem[{Tversky(1969)}]{tversky1969intransitivity}
\textsc{Tversky, A.} (1969).
\newblock Intransitivity of preferences.
\newblock \textit{Psychological review} \textbf{76} 31.

\bibitem[{Wang et~al.(2024)Wang, Liu and Jin}]{wang2024rlhf}
\textsc{Wang, Y.}, \textsc{Liu, Q.} and \textsc{Jin, C.} (2024).
\newblock Is rlhf more difficult than standard rl? a theoretical perspective.
\newblock \textit{Advances in Neural Information Processing Systems} \textbf{36}.

\bibitem[{Wu et~al.(2023)Wu, Jin, Di, Lou, Farnoud and Gu}]{wu2023borda}
\textsc{Wu, Y.}, \textsc{Jin, T.}, \textsc{Di, Q.}, \textsc{Lou, H.}, \textsc{Farnoud, F.} and \textsc{Gu, Q.} (2023).
\newblock Borda regret minimization for generalized linear dueling bandits.
\newblock In \textit{ICML 2023 Workshop The Many Facets of Preference-Based Learning}.

\bibitem[{Xiong et~al.(2023)Xiong, Dong, Ye, Zhong, Jiang and Zhang}]{xiong2023gibbs}
\textsc{Xiong, W.}, \textsc{Dong, H.}, \textsc{Ye, C.}, \textsc{Zhong, H.}, \textsc{Jiang, N.} and \textsc{Zhang, T.} (2023).
\newblock Gibbs sampling from human feedback: A provable kl-constrained framework for rlhf.
\newblock \textit{arXiv preprint arXiv:2312.11456} .

\bibitem[{Xu et~al.(2023)Xu, Lee, Sukhbaatar and Weston}]{xu2023some}
\textsc{Xu, J.}, \textsc{Lee, A.}, \textsc{Sukhbaatar, S.} and \textsc{Weston, J.} (2023).
\newblock Some things are more cringe than others: Preference optimization with the pairwise cringe loss.
\newblock \textit{arXiv preprint arXiv:2312.16682} .

\bibitem[{Ye et~al.(2024)Ye, Xiong, Zhang, Jiang and Zhang}]{ye2024theoretical}
\textsc{Ye, C.}, \textsc{Xiong, W.}, \textsc{Zhang, Y.}, \textsc{Jiang, N.} and \textsc{Zhang, T.} (2024).
\newblock A theoretical analysis of nash learning from human feedback under general kl-regularized preference.
\newblock \textit{arXiv preprint arXiv:2402.07314} .

\bibitem[{Yuan et~al.(2024)Yuan, Pang, Cho, Sukhbaatar, Xu and Weston}]{yuan2024self}
\textsc{Yuan, W.}, \textsc{Pang, R.~Y.}, \textsc{Cho, K.}, \textsc{Sukhbaatar, S.}, \textsc{Xu, J.} and \textsc{Weston, J.} (2024).
\newblock Self-rewarding language models.
\newblock \textit{arXiv preprint arXiv:2401.10020} .

\bibitem[{Zellers et~al.(2019)Zellers, Holtzman, Bisk, Farhadi and Choi}]{zellers2019hellaswag}
\textsc{Zellers, R.}, \textsc{Holtzman, A.}, \textsc{Bisk, Y.}, \textsc{Farhadi, A.} and \textsc{Choi, Y.} (2019).
\newblock Hellaswag: Can a machine really finish your sentence?
\newblock \textit{arXiv preprint arXiv:1905.07830} .

\bibitem[{Zhao et~al.(2023)Zhao, Joshi, Liu, Khalman, Saleh and Liu}]{zhao2023slic}
\textsc{Zhao, Y.}, \textsc{Joshi, R.}, \textsc{Liu, T.}, \textsc{Khalman, M.}, \textsc{Saleh, M.} and \textsc{Liu, P.~J.} (2023).
\newblock Slic-hf: Sequence likelihood calibration with human feedback.
\newblock \textit{arXiv preprint arXiv:2305.10425} .

\bibitem[{Zheng et~al.(2023)Zheng, Chiang, Sheng, Zhuang, Wu, Zhuang, Lin, Li, Li, Xing et~al.}]{zheng2023judging}
\textsc{Zheng, L.}, \textsc{Chiang, W.-L.}, \textsc{Sheng, Y.}, \textsc{Zhuang, S.}, \textsc{Wu, Z.}, \textsc{Zhuang, Y.}, \textsc{Lin, Z.}, \textsc{Li, Z.}, \textsc{Li, D.}, \textsc{Xing, E.} \textsc{et~al.} (2023).
\newblock Judging llm-as-a-judge with mt-bench and chatbot arena.
\newblock \textit{Advances in Neural Information Processing Systems} \textbf{36}.

\bibitem[{Zheng et~al.(2024)Zheng, Chiang, Sheng, Zhuang, Wu, Zhuang, Lin, Li, Li, Xing et~al.}]{zheng2024judging}
\textsc{Zheng, L.}, \textsc{Chiang, W.-L.}, \textsc{Sheng, Y.}, \textsc{Zhuang, S.}, \textsc{Wu, Z.}, \textsc{Zhuang, Y.}, \textsc{Lin, Z.}, \textsc{Li, Z.}, \textsc{Li, D.}, \textsc{Xing, E.} \textsc{et~al.} (2024).
\newblock Judging llm-as-a-judge with mt-bench and chatbot arena.
\newblock \textit{Advances in Neural Information Processing Systems} \textbf{36}.

\bibitem[{Zhu et~al.(2023)Zhu, Jiao and Jordan}]{zhu2023principled}
\textsc{Zhu, B.}, \textsc{Jiao, J.} and \textsc{Jordan, M.~I.} (2023).
\newblock Principled reinforcement learning with human feedback from pairwise or $ k $-wise comparisons.
\newblock \textit{arXiv preprint arXiv:2301.11270} .

\end{thebibliography}
\bibliographystyle{ims}

\end{document}